
\documentclass[10pt,twocolumn,letterpaper]{article}

\usepackage{iccv}              

%
%


\usepackage{microtype}
\usepackage{graphicx}
\usepackage{booktabs} 
\usepackage{pgfplots}
\usepackage{pgfplotstable}
\pgfplotsset{compat=1.7}
\usepackage{multirow}
\usepackage{tabularx}

\definecolor{color1}{HTML}{1f77b4}
\definecolor{color2}{HTML}{ff7f0e}
\definecolor{color3}{HTML}{2ca02c}
\definecolor{color4}{HTML}{e4bcad}
\definecolor{color5}{HTML}{c80064}
\definecolor{color6}{HTML}{76c8c8}
\definecolor{color7}{HTML}{3c4e4b}
\definecolor{color8}{HTML}{7f7f7f}
\definecolor{color9}{HTML}{bcbd22}
\definecolor{color10}{HTML}{17becf}
\definecolor{color11}{HTML}{aec7e8}
\definecolor{color12}{HTML}{ffbb78}
\definecolor{color13}{HTML}{98df8a}
\definecolor{color14}{HTML}{ff9896}
\definecolor{color15}{HTML}{c5b0d5}
\definecolor{color16}{HTML}{c49c94}
\definecolor{color17}{HTML}{f7b6d2}
\definecolor{color18}{HTML}{c7c7c7}
\definecolor{color19}{HTML}{dbdb8d}
\definecolor{color20}{HTML}{9edae5}
\definecolor{color21}{HTML}{ad494a}

\usepackage{amsmath}
\usepackage{amssymb}
\usepackage{mathtools}
\usepackage{amsthm}
\usepackage{amsfonts,mathrsfs}

\numberwithin{equation}{section}
\theoremstyle{plain}
\newtheorem{theorem}{Theorem}[section]

\newtheorem{lemma}[theorem]{Lemma}

\theoremstyle{definition}
\newtheorem{definition}[theorem]{Definition}

\theoremstyle{remark}

\usepackage{rotating}

\newcommand\R{\mathbb{R}}

\parskip=0.2cm
\parindent=0.0cm

%
\definecolor{iccvblue}{rgb}{0.21,0.49,0.74}
\usepackage[pagebackref,breaklinks,colorlinks,allcolors=iccvblue]{hyperref}



\title{Towards Higher Effective Rank in Parameter-efficient Fine-tuning \\using Khatri--Rao Product}

\author{Paul Albert\textsuperscript{1}\thanks{Work done at the Australian Institute for Machine Learning.} \quad Frederic Z. Zhang\textsuperscript{1}\footnotemark[1] \quad Hemanth Saratchandran\textsuperscript{2}\\Anton van den Hengel\textsuperscript{2} \quad Ehsan Abbasnejad\textsuperscript{3} \\
{\fontsize{11}{15}\selectfont \textsuperscript{1}Amazon Machine Learning, Australia\quad \textsuperscript{2}Australian Institute for Machine Learning\quad\textsuperscript{3}Monash University} \\
\normalsize \texttt{\{albrtpa, fredzz\}@amazon.com} \quad
{\tt\small \href{https://github.com/PaulAlbert31/KRAdapter}{https://github.com/PaulAlbert31/KRAdapter}}
}

\begin{document}
\maketitle
\begin{abstract}
Parameter-efficient fine-tuning (PEFT) has become a standard approach for adapting large pre-trained models. Amongst PEFT methods, low-rank adaptation (LoRA) has achieved notable success. However, recent studies have highlighted its limitations compared against full-rank alternatives, particularly when applied to multimodal and large language models.
In this work, we present a quantitative comparison amongst full-rank and low-rank PEFT methods using a synthetic matrix approximation benchmark with controlled spectral properties. Our results confirm that LoRA struggles to approximate matrices with relatively flat spectrums or high frequency components---signs of high effective ranks.
To this end, we introduce KRAdapter, a novel PEFT algorithm that leverages the Khatri-Rao product to produce weight updates, which, by construction, tends to produce matrix product with a high effective rank.
We demonstrate performance gains with KRAdapter on vision-language models up to 1B parameters and on large language models up to 8B parameters, particularly on unseen common-sense reasoning tasks.
In addition, KRAdapter maintains the memory and compute efficiency of LoRA,
making it a practical and robust alternative to fine-tune billion-scale parameter models.
\end{abstract}
\begin{figure*}[h]
\includegraphics[width=\textwidth]{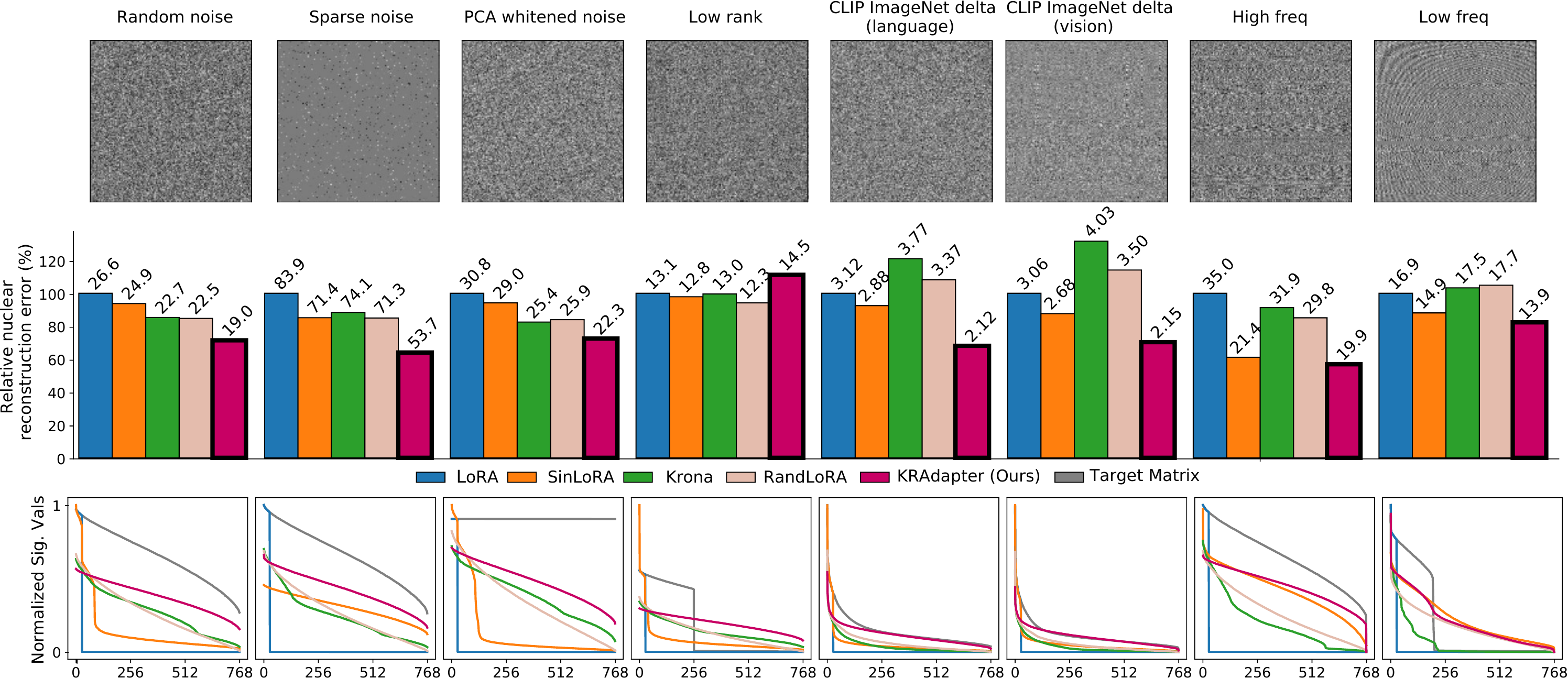}    
\caption{A visualization of the capacity of various PEFT methods to approximate the spectrum of different types of weight matrices. The first row shows the structure of the matrices.  In the second row we plot the squared nuclear error as a percentage of that of the LoRA solution, and add the absolute value above each bar. Lower is better. The third row shows the singular value distribution of the solutions compared to the target (square root scale). A flatter distribution indicates a higher effective rank. We compare LoRA~\cite{2022_ICLR_lora}, SinLoRA~\cite{2024_arxiv_sinelora}, Krona~\cite{2023_NeurIPSW_krona}, RandLoRA~\cite{2025_ICLR_RandLoRA} and our proposed algorithm KRAdapter. KRAdapter achieves a better approximation to the true matrix in every case except when we force it to have low rank (and thus to match the LoRA model).
\label{fig:results-toy-relative}}
\end{figure*}

\section{Introduction}
Large pre-trained models have lead to great success in both computer vision and natural language processing~\cite{2024_llama,2021_ICLR_vit,2021_ICML_CLIP,2023_arxiv_GPT4,2019_arxiv_roberta}.
However, fine-tuning these models for specific downstream tasks often demands substantial computational resources due to the sheer number of parameters.
Recently, parameter-efficient fine-tuning (PEFT) has emerged as an effective strategy to address this challenge,
enabling efficient adaptation using a fraction of the trainable parameters~\cite{2024_arxiv_peftsurvey}.
In particular, low-rank adaptation (LoRA)~\cite{2022_ICLR_lora}, a prominent PEFT technique, formulates the learnable weight updates as the product of two low-rank matrices, achieving remarkable memory efficiency with little performance drop.
Despite its success, recent studies~\cite{2025_ICLR_RandLoRA,2024_arxiv_sinelora} suggest that the low-rank formulation 
can limit its ability to effectively fine-tune models for complex tasks.
To shed light on this limitation, we compare LoRA to several full-rank PEFT methods by approximating weight matrices with controlled spectral properties.
As shown in Figure~\ref{fig:results-toy-relative}, full-rank methods can outperform LoRA in approximating both synthetic and real weight matrices.
Not all full-rank PEFT methods are equally effective---some produce approximations with rapidly decaying singular values, leading to a low \textit{effective rank}~\cite{2007_ESPC_effectiverank}.
Building on prior work~\cite{2019_ICML_batchnuclearnorm,2024_AAAI_svdregtesttime},
which links substantial tail singular values to improved generalization and robustness,
we hypothesize that a high effective rank is crucial to the performance of full-rank PEFT methods.

To this end, we introduce KRAdapter, a novel PEFT approach that leverages the Khatri-Rao matrix product.
By design, KRAdapter constructs provably full-rank weight matrices and empirically maintains large tail singular values.
The increased effective rank indicates that the weight updates span the parameter space more uniformly, allowing the model to capture complex patterns and generalize better to distribution shifts.
Across diverse architectures, including vision-language models and large language models, KRAdapter achieves performance gains while maintaining parameter efficiency. Furthermore, KRAdapter exhibits superior out-of-distribution (OOD) performance compared to LoRA and other full-rank methods. Our key contributions are: (1) a controlled matrix approximation benchmark comparing the representational advantages of full- and rank-constrained PEFT algorithms; (2) the identification of effective rank as a critical factor differentiating full-rank PEFT techniques; (3) KRAdapter, a novel Khatri-Rao product-based PEFT method that consistently achieves high effective rank; (4) empirical validation of its superior performance and out-of-distribution robustness compared to LoRA and other PEFT methods; and (5) evidence of KRAdapter's improved parameter scaling efficiency among full-rank approaches.

\section{Related Work}

The development of large pre-trained models has revolutionized various fields, yet their deployment and fine-tuning pose significant computational challenges \cite{2024_llama,2021_ICLR_vit,2021_ICML_CLIP,2023_arxiv_GPT4,2019_arxiv_roberta}. Parameter-Efficient Fine-Tuning (PEFT) methods have emerged as a crucial paradigm to address these challenges, allowing adaptation of these massive models to downstream tasks while training only a small fraction of the parameters \cite{2024_arxiv_peftsurvey}.

\subsection{Low-Rank Adaptation (LoRA)} LoRA \cite{2022_ICLR_lora} has become a cornerstone of PEFT due to its simplicity and effectiveness. LoRA introduces low-rank matrices to approximate the weight updates, significantly reducing the number of trainable parameters. Its success has led to numerous extensions and applications across various modalities \cite{2024_CVPR_promptvslora,2023_IJCAIW_LoRAAudio,2024_TMLR_multiloragen,2024_NeurIPS_atlas}. However, recent studies have begun to explore the limitations of the low-rank constraint, suggesting that it might hinder the model's ability to capture complex relationships, especially for demanding tasks \cite{2025_ICLR_RandLoRA, 2024_arxiv_sinelora,2023_NeurIPSW_krona}. Furthermore, the spectral properties of weight matrices, which influence a network's capacity for learning and generalization \cite{2017_arxiv_spectralnormreg,2018_NeurIPS_Spectralnormgan,2017_ICML_expressivepower}, may be constrained by the low-rank nature of LoRA updates. Our work builds upon these observations, confirming the need for methods that can capture richer feature interactions while maintaining parameter efficiency, potentially through weight updates with more desirable spectral characteristics.

\subsection{Enhancing the training efficiency}
To improve convergence speed and overall performance of LoRA, several improvements have been proposed. LoRA+~\cite{2024_ICML_lora+} enhances the training process by applying different learning rates to the low-rank matrices, allowing for more nuanced optimization. DoRA~\cite{2024_ICML_DoRA} decomposes the low-rank updates into magnitude and direction components, providing finer-grained control over the adaptation process. HydraLoRA~\cite{2024_NeurIPS_HydraLoRA} employs multiple upscaling matrices, acting as specialized "experts" focusing on distinct aspects of the input, potentially capturing more diverse feature interactions.

\subsection{Further reducing parameters}
While LoRA is inherently parameter-efficient, subsequent research has focused on pushing these limits further. Strategies to minimize the trainable parameter count beyond the rank-one case include methods leveraging linear combinations of fixed random matrices, such as VeRA~\cite{2024_ICLR_VeRA} and NoLA~\cite{2024_ICLR_NoLA}, where learned one-dimensional vectors combine pre-defined non-trainable random matrices. Another line of work explores initializing or fixing LoRA's projection matrices using insights from the pre-trained weights. Methods like SVFT~\cite{2024_ICMLW_SVFT} and Pissa~\cite{2024_NeurIPS_Pissa} fix or respectively initialize the projection matrices to the first eigenvectors corresponding to the largest singular values of the original weight matrix. Conversely, MiLoRA~\cite{2024_arxiv_milora} utilizes the last eigenvectors.

\subsection{Full-rank updates\label{sec:relatedfullrank}}
To address the representational bottlenecks and potential spectral limitations inherent in low-rank approximations, several methods explore parameter-efficient ways to perform full-rank updates. The goal is to capture more complex relationships present in the data without incurring the full cost of fine-tuning all parameters. Kronecker Adapters (KronA) \cite{2023_NeurIPSW_krona} utilize the Kronecker product to construct full-rank update matrices from smaller trainable factors. SinLoRA \cite{2024_arxiv_sinelora} applies a sine function parameterized by a frequency parameter on top of a low-rank update, effectively producing a full-rank update. RandLoRA \cite{2025_ICLR_RandLoRA} employs parameter-efficient random matrix combinations to generate full-rank matrices. While these methods theoretically produce full rank updates, we observe their effective rank \cite{2007_ESPC_effectiverank} is usually low (i.e. the tail singular values are very small), suggesting they might not fully capture the desired spectral properties. 

\subsection{Motivations}
Our analysis of existing work indicates that the low-rank update in LoRA can limit performance on challenging tasks and with complex architectures. While recent advancements aim to overcome this by generating theoretically full-rank updates via random basis combinations, Kronecker products, or sine activations, we question whether this theoretical rank consistently translates to high effective rank. We hypothesize that the Khatri-Rao product offers an alternative method for constructing full-rank update matrices that demonstrably achieve higher effective rank. This increased effective rank would enable to learn more complex feature representations, leading to improved performance across diverse tasks and architectures.

\section{Khatri-Rao Adapters (KRAdapter)}

This section details the formulation of the proposed Khatri-Rao Adapters (KRAdapter) for parameter-efficient fine-tuning. We begin by briefly discussing Low-Rank Adaptation (LoRA) before introducing the KRAdapter mechanism and analyzing its parameter efficiency.

\subsection{Preliminaries}

Consider a pre-trained linear layer in a deep neural network, characterized by a weight matrix $\mathbf{W}_0 \in \mathbb{R}^{d_{out} \times d_{in}}$, where $d_{in}$ is the input dimension and $d_{out}$ is the output dimension. During standard full fine-tuning, the weight matrix $\mathbf{W}_0$ is updated directly by gradient descent. Given an input $\mathbf{x} \in \mathbb{R}^{d_{in}}$, the output $\mathbf{h}$ is computed as:

\begin{equation}
\mathbf{h} = \mathbf{W}_0 \mathbf{x} + \mathbf{b},
\end{equation}

where $\mathbf{b} \in \mathbb{R}^{d_{out}}$ is the bias term (for simplicity, we omit the bias term in subsequent derivations but it can be easily incorporated).

Low-Rank Adaptation (LoRA) \cite{2022_ICLR_lora} addresses the inefficiency of full fine-tuning by freezing the pre-trained weights $\mathbf{W}_0$ and introducing a low-rank update. Specifically, a low-rank matrix $\Delta \mathbf{W} = \mathbf{B} \mathbf{A}$ is added to the original weights, where $\mathbf{A} \in \mathbb{R}^{r \times d_{in}}$ and $\mathbf{B} \in \mathbb{R}^{d_{out} \times r}$, with $r \ll \min(d_{in}, d_{out})$ being the rank. During fine-tuning, only the parameters of $\mathbf{A}$ and $\mathbf{B}$ are updated. The output of the LoRA-adapted layer is:

\begin{equation}
\mathbf{h} = (\mathbf{W}_0 + \alpha \mathbf{B} \mathbf{A}) \mathbf{x},
\end{equation}
where $\alpha$ is a scaling factor that helps in stabilizing training and controlling the magnitude of the adapter.

\subsection{Method formulation}
In contrast to LoRAs, our proposed method, Khatri-Rao Adapters (KRAdapter), achieves parameter efficiency without imposing the low-rank constraint.
This is achieved by constructing $\Delta \mathbf{W}$ using the \textit{Khatri-Rao product}, otherwise known as a column-wise Kronecker product.

\textbf{Definition 3.1 (Khatri-Rao Product):} Given two matrices $\mathbf{U} \in \mathbb{R}^{a \times c}$ and $\mathbf{V} \in \mathbb{R}^{b \times c}$, their Khatri-Rao product, denoted by $\mathbf{U} \odot \mathbf{V}$, is a matrix of defined as:
\begin{equation}
\mathbf{U} \odot \mathbf{V} =
    \begin{bmatrix}
        u_{11} \mathbf{v}_1, u_{12} \mathbf{v}_2, ..., u_{1c} \mathbf{v}_c \\
        ... \\
        u_{a1} \mathbf{v}_1, u_{a2} \mathbf{v}_2, ..., u_{ac} \mathbf{v}_c \\
    \end{bmatrix}
    \in \mathbb{R}^{ab \times c},
\end{equation}
where $u_{ij}$ is the entry in the $i$-th row and $j$-th column of $\mathbf{U}$ and $\mathbf{v}_j \in \mathbb{R}^{b}$ is the $j$-th column of $\mathbf{V}$.

For a weight update $\Delta \mathbf{W}$ of size $d_{out} \times d_{in}$,
define $\mathbf{U} \in \mathbb{R}^{k_1 \times d_{in} }$ and $\mathbf{V} \in \mathbb{R}^{k_2 \times d_{in}}$,
where $k_1 k_2 = d_{out}$.

The forward pass of a linear layer with KRAdapter is as follows
\begin{equation}
\mathbf{h} = \left( \mathbf{W}_0 + \alpha \mathbf{U} \odot \mathbf{V} \right) \mathbf{x},
\end{equation}
where $\alpha$ is a scaling factor similar to LoRA that we typically set to $0.1$. 
Prior to training, $\mathbf{U}$ is initialized as zeros and $\mathbf{V}$ using a Kaiming uniform initialization~\cite{2015_ICCV_kaiminginit} with a negative slope coefficient of $\sqrt{1/k_1}$ which we empirically find to be the optimal for convergence.

\subsection{Parameter efficiency}
The number of trainable parameters introduced by KRAdapter is determined by the dimensions of the matrices $\mathbf{U}$ and $\mathbf{V}$, and the choice of $k_1$ (or $k_2$ resp.). The total number of trainable parameters for KRAdapter is:
\begin{equation}
N = d_{in}(k_1 + k_2). 
\end{equation}
$N$ is minimized when $k_1=k_2=\sqrt{d_{out}}$, which we adopt as the default configuration.
Details are deferred to Appendix~\ref{app:minimath}.
When $d_{out}$ is not a square number, we set $k_1 = \lfloor \sqrt{d_{out}}\rfloor$ and $k_2 = \lceil \frac{d_{out}}{k_1} \rceil$ such that $k_1 k_2 \ge d_{out}$.
The resultant product $\mathbf{U} \odot \mathbf{V}$ is then truncated to size $d_{out} \times d_{in}$.

KRAdapter achieves the same parameter reduction a LoRA of rank between 16 and 32 depending on matrix shapes, a common setting~\cite{2022_ICLR_lora,2024_ICML_DoRA,2023_ICLR_AdaLoRA} without incurring any significant increase in computational cost.

\subsection{Full-rank guarantee\label{sec:fullrank}}

This section establishes a full rank guarantee for the Khatri-Rao product when its constituent matrices are randomly initialized. This full rank property, as we will show, is crucial to the improvement of robustness against distribution shifts.

For brevity of exposition, let us assume $k_1=k_2=k $, i.e., $\mathbf{U} \in \mathbb{R}^{k \times d_{in}}$ and $\mathbf{V} \in \mathbb{R}^{k \times d_{in}}$. We assume that each matrix has rank $k$. Specifically, we consider the case where the entries of $\mathbf{U}$ and $\mathbf{V}$ are independently and identically distributed (i.i.d.) random variables drawn from either a standard Gaussian distribution $\mathcal{N}(0, 1)$ or a uniform distribution on $[-\delta, \delta]$ for some $\delta > 0$.

\begin{theorem}\label{theo:fullrank}
Let $\mathbf{U} \in \mathbb{R}^{k \times d_{in}}$ and $\mathbf{V} \in \mathbb{R}^{k \times d_{in}}$ where $k \leq d_{in} \leq k^2$ be matrices whose entries are chosen i.i.d. from a standard Gaussian or uniform distribution. Then, the Khatri-Rao product $\mathbf{U} \odot \mathbf{V}$ has full column rank almost surely, i.e.,
$$ \text{rank}(\mathbf{U} \odot \mathbf{V}) = d_{in}. $$
\end{theorem}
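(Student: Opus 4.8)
The plan is to reduce the claim to a standard polynomial-genericity argument. First I would record the structural fact, implicit in Definition 3.1, that the $j$-th column of $\mathbf{U} \odot \mathbf{V}$ is the Kronecker product $\mathbf{u}_j \otimes \mathbf{v}_j \in \mathbb{R}^{k^2}$ of the $j$-th columns of $\mathbf{U}$ and $\mathbf{V}$. Hence $\mathrm{rank}(\mathbf{U}\odot\mathbf{V}) = d_{in}$ exactly when the $d_{in}$ vectors $\{\mathbf{u}_j\otimes\mathbf{v}_j\}_{j=1}^{d_{in}}$ are linearly independent in $\mathbb{R}^{k^2}$, which is feasible only because $d_{in}\le k^2$. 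Each entry of $\mathbf{U}\odot\mathbf{V}$ is a bilinear monomial in the entries of $\mathbf{U}$ and $\mathbf{V}$, so every $d_{in}\times d_{in}$ minor is a polynomial in the $2kd_{in}$ random entries; I will treat the whole problem as one about the zero sets of such polynomials.

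Next I would localize the bad event. The product fails to have full column rank precisely when all $d_{in}\times d_{in}$ minors vanish simultaneously; therefore the rank-deficient set is contained in the zero set $\{m^\star = 0\}$ of any single minor $m^\star$ that I can show is not the zero polynomial. Producing one such minor is the crux, and here I would exhibit an explicit witness rather than compute. Since $d_{in}\le k^2$, choose $d_{in}$ distinct index pairs $(p_j,q_j)\in\{1,\dots,k\}^2$ and set the columns $\mathbf{u}_j=\mathbf{e}_{p_j}$ and $\mathbf{v}_j=\mathbf{e}_{q_j}$ (standard basis vectors). Then the $\mathbf{u}_j\otimes\mathbf{v}_j$ are $d_{in}$ distinct standard basis vectors of $\mathbb{R}^{k^2}$, so selecting the rows indexed by these pairs yields the $d_{in}\times d_{in}$ identity block; the corresponding minor evaluates to $1$. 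This single evaluation certifies that $m^\star$ is not identically zero as a polynomial in the entries.

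Finally I would invoke the measure-zero principle: the zero set of a nonzero real polynomial in finitely many variables has Lebesgue measure zero. Because the entries of $\mathbf{U}$ and $\mathbf{V}$ are i.i.d.\ from a Gaussian or a uniform law, their joint distribution is absolutely continuous with respect to Lebesgue measure (on $\mathbb{R}^{2kd_{in}}$ in the Gaussian case, on the cube $[-\delta,\delta]^{2kd_{in}}$ in the uniform case). Consequently $\Pr[m^\star=0]=0$, and since the rank-deficient event sits inside $\{m^\star=0\}$, it too has probability zero; thus $\mathrm{rank}(\mathbf{U}\odot\mathbf{V})=d_{in}$ almost surely.

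I expect the main obstacle to be the middle step, certifying a nonzero minor, and the standard-basis construction dispatches it cleanly while using only $d_{in}\le k^2$. Two minor points remain to be handled with care: in the uniform case the support is a cube, but a Lebesgue-null set remains null under an absolutely continuous law, so the conclusion is unaffected; and the hypothesis $k\le d_{in}$ merely guarantees that $\mathbf{U}$ and $\mathbf{V}$ are themselves full row rank almost surely, consistent with the standing assumption, whereas the essential constraint driving the column-rank conclusion is $d_{in}\le k^2$.
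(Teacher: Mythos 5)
Your proof is correct, but it follows a genuinely different route from the paper's. The paper proceeds constructively with random linear algebra: it partitions the $d_{in}$ columns of $\mathbf{U}$ and $\mathbf{V}$ into $\lceil d_{in}/k \rceil$ batches of at most $k$ columns, invokes a lemma (borrowed from the RandLoRA paper) that i.i.d.\ Gaussian/uniform vectors are almost surely linearly independent, uses the fact that Kronecker products $x_i \otimes y_i$ of corresponding members of two linearly independent families are linearly independent within each batch, and then glues the batches together by arguing that no two columns of $\mathbf{U}$ (resp.\ $\mathbf{V}$) are scalar multiples of one another. Your argument instead is a polynomial-genericity (Zariski) argument: identify the columns of $\mathbf{U} \odot \mathbf{V}$ as $\mathbf{u}_j \otimes \mathbf{v}_j$, fix one $d_{in} \times d_{in}$ minor, certify it is a nonzero polynomial via the standard-basis witness $\mathbf{u}_j = \mathbf{e}_{p_j}$, $\mathbf{v}_j = \mathbf{e}_{q_j}$ with distinct pairs $(p_j, q_j)$, and conclude by absolute continuity that its zero set, which contains the rank-deficient event, has probability zero. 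Your route buys two things the paper's does not. First, rigor at the gluing step: the paper's final inference---that pairwise non-proportionality of columns suffices for the cross-batch Kronecker products to be independent---is not a valid deterministic implication (e.g.\ with $k=2$, the four rank-one terms in the two decompositions $I = \mathbf{e}_1\mathbf{e}_1^T + \mathbf{e}_2\mathbf{e}_2^T = \tfrac{1}{2}(1,1)(1,1)^T + \tfrac{1}{2}(1,-1)(1,-1)^T$ have pairwise non-proportional factors yet dependent Kronecker products), whereas your minor-based argument quantifies genericity once, globally, and needs no such gluing. Second, slightly more generality: your proof uses only $d_{in} \le k^2$ and never needs $k \le d_{in}$, which, as you note, serves only to make $\mathbf{U}$ and $\mathbf{V}$ full row rank and is inessential to the conclusion. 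What the paper's approach buys in exchange is an elementary, self-contained narrative that reuses its own prior lemma and avoids appealing to the measure-zero property of polynomial zero sets, but as written it is the weaker of the two arguments.
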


The detailed proof of Theorem~\ref{theo:fullrank} is provided in Appendix~\ref{app:theorank}.  Crucially, this theorem highlights a significant distinction from low-rank adaptation (LoRA) techniques. When employing LoRA updates to matrices $\mathbf{U}$ and $\mathbf{V}$, the resulting matrix, even after combination, can have a maximum rank of at most $k \leq d_{in}$, thus inherently producing rank-deficient matrices. In contrast, Theorem~\ref{theo:fullrank} demonstrates that the Khatri-Rao product of the same randomly initialized matrices $\mathbf{U}$ and $\mathbf{V}$ almost surely achieves full column rank, thereby providing a mechanism to construct high-rank matrices in a parameter-efficient manner. 

\subsection{Differences with Kronecker Adapters}
Since the Khatri-Rao product can be seen as column-wise Kronecker product,
our proposed KRAdapter and Kronecker Adapters (KronA)~\cite{2023_NeurIPSW_krona} bear some similarity.
However, due to the different constructions, the resulting properties differ significantly.
Specifically, we find that the Khatri-Rao product leads to different spectral properties in the resultant matrix, which displays significantly higher tail singular values. We find this to be a desirable property for PEFT methods as it 056
increases the representation power when modelling various types of weight matrices (Figure~\ref{fig:results-toy-relative}) and also improves OOD generalization and robustness in language and vision tasks (Table~\ref{tab:oodgen} and 3~\ref{tab:commsense})
Further discussion supported by empirical differences is available in Appendix~\ref{app:morediffkrona}.

\section{Experiments}\label{exp}
\subsection{Matrix approximation\label{exp:toy}}
To isolate and understand the inherent strengths and weaknesses of KRAdapter compared to LoRA and other full-rank PEFT methods, we design a series of controlled experiments using synthetic weight matrices with distinct structural properties. 

We design six matrix patterns to investigate parameter-efficient fine-tuning algorithms. The \textbf{normally distributed random matrix} acts as a high-rank baseline, evaluating general approximation performance. The \textbf{random 90\% sparse matrix} simulates scenarios where critical pre-trained parameters should remain unchanged. The \textbf{PCA-whitened random matrix} assesses algorithms' ability to handle highly decorrelated representations. The \textbf{low-rank random matrix}, constructed by keeping only the top quartile of singular values, tests full-rank algorithms' capacity to model low-rank structures. The \textbf{CLIP ImageNet fine-tuned weight delta (Vision or Language)}, obtained by the element-wise difference between the pre-trained CLIP-ViT-L/14 weights and the weights obtained after standard fine-tuning on ImageNet (also known as task vector~\cite{2024_NeurIPS_atlas}), represents a realistic target weight for LoRA in transformer in real-world fine-tuning. The \textbf{High/low frequency features}, generated using superposed sinusoidal functions with varying frequencies ($[1000, 10000]$ Hz and $[1, 100]$ Hz respectively), assess algorithms' bias towards feature frequencies.

\textbf{Methodology} For each target matrix pattern, we train the PEFT algorithms to minimize the mean squared error (MSE) between the estimated matrix and the target matrix.  We train on $1024\times 768$ matrices which is the dimension of the query part of the attention matrix in ViT-L/14. All algorithms are trained with the same or higher number of trainable parameters than KRAdapter to ensure a fair comparison. We report the absolute nuclear reconstruction error (average absolute element-wise singular value difference) to evaluate each algorithm's capacity to capture the spectrum of the target matrix. Our investigation focuses on comparing with LoRA~\cite{2022_ICLR_lora} and its full-rank alternatives, specifically: SinLoRA~\cite{2024_arxiv_sinelora}, RandLoRA~\cite{2025_ICLR_RandLoRA}, and Krona~\cite{2023_NeurIPSW_krona}. Further details are available in appendix~\ref{app:toy}.

\textbf{Results} We present results in Figure~\ref{fig:results-toy-relative} where our analysis yields several critical insights. First, full-rank PEFT algorithms consistently achieve lower approximation errors compared to LoRA across a range of target matrices (low rank matrices excepted). The advantage of high-rank PEFT methods is particularly apparent when approximating matrices with highly de-correlated features such as random, whitened or sparse noise. Results on approximating traditionally fine-tuned weights for the CLIP vision and language backbones are more nuanced as RandLoRA and Krona struggle to emulate the fine-tuned delta. This difference does not translate to poor performance in practice though which indicates these algorithms probably learn to solve tasks in a different manner than standard fine-tuning does. KRAdapter is especially performant at approximating the CLIP fine-tuned deltas as well as the high frequency matrix, demonstrating a high adaptability to learn highly tailored features. We evidence a key limitation of full-rank methods when approximating explicitly low-rank target matrices, as the algorithms did not demonstrate a substantial advantage over LoRA. Further analysis suggests a potential affinity of LoRA towards low-frequency feature representations as full-rank algorithms struggle to improve over LoRA's solution in this case. The low frequency matrix is however also low rank by nature which would favour LoRA's solution. Overall, KRAdapter provides a strong solution to all cases, low rank matrix excluded, which conforts us in its suitability as a parameter-efficient algorithm. Visualization of the approximated matrices is available in Figure~\ref{fig:vis-toy} in the appendix.

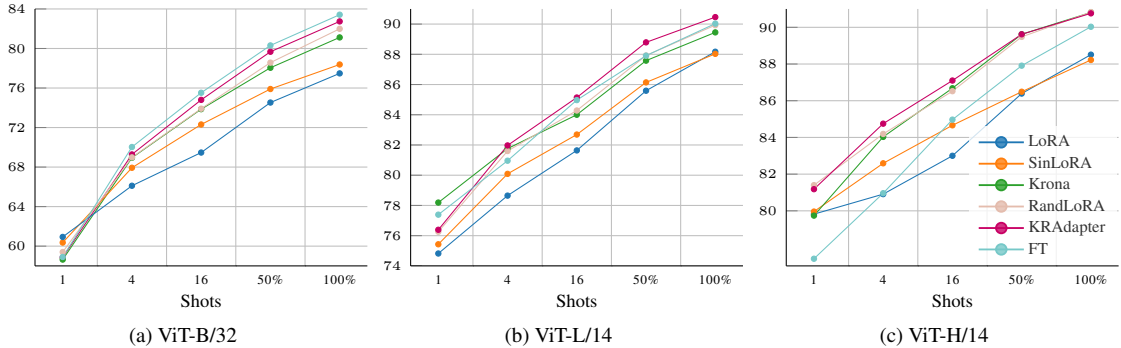
\begin{figure*}
    \centering
    \subfloat[ViT-B/32]{\begin{tikzpicture}
    \begin{axis}[
        width=6cm,
        height=5cm,
        font=\tiny,
        ymin=58,
        ymax=84,
        legend style={draw=none,at={(0.55,1.3)}, 
                    text width=1.3cm,
                    anchor=north,
                    legend columns=6,
                    fill=none,
                nodes={scale=0.8, transform shape},},
        xlabel={\scriptsize{Shots}},
        ylabel={},
        ymajorgrids=true,
        ytick={56,60,64,68,72,76,80,84},
        yticklabel={$\tiny\pgfmathprintnumber{\tick}$},
        ylabel shift=-5pt,
        xlabel shift=-3pt,
        axis x line*=bottom,
        axis y line*=left,  
        symbolic x coords={1,4,16,50\%,100\%},
        cycle list={color1,color2,color3,color4,color5,color6,color7,color8,color9,color21},
        xtick={1,4,16,50\%,100\%},
        xticklabel={\textsc{\tick}},
        minor x tick num=1,
        xminorgrids,
        minor tick length=0,
        major x tick style = transparent,
        mark size=1.0pt,
        ]    

\addplot+[mark=*,mark size=1pt,mark options={fill=color5},color=color5,draw opacity=1] coordinates {(1, 58.86)(4, 69.28)(16, 74.80)(50\%, 79.67)(100\%, 82.74)};

\addplot+[mark=*,mark size=1pt,mark options={fill=color3},color=color3,draw opacity=1] coordinates {(1, 58.64)(4, 68.94)(16, 73.86)(50\%, 78.05)(100\%, 81.12)};

\addplot+[mark=*,mark size=1pt,mark options={fill=color2},color=color2,draw opacity=1] coordinates {(1, 60.36)(4, 67.93)(16, 72.31)(50\%, 75.90)(100\%, 78.38)};

\addplot+[mark=*,mark size=1pt,mark options={fill=color4},color=color4,draw opacity=1] coordinates {(1, 59.40)(4, 68.98)(16, 73.91)(50\%, 78.57)(100\%, 81.99)};

\addplot+[mark=*,mark size=1pt,mark options={fill=color1},color=color1,draw opacity=1] coordinates {(1, 60.93)(4, 66.11)(16, 69.47)(50\%, 74.53)(100\%, 77.48)};

\addplot+[mark=*,mark size=1pt,mark options={fill=color6},color=color6,draw opacity=1] coordinates {(1, 58.90)(4, 70.03)(16, 75.52)(50\%, 80.31)(100\%, 83.42)};

\end{axis}
    \begin{axis}[
       xmin=0,
       xmax=16,
       ymin=1,
       ymax=2,
       hide axis,
       width=5cm,
       height=5cm,
       font=\small,
       mark size=1.0pt,
       legend style={
               at={(1.0,0.0)},
               anchor=south,
               draw=none,
               legend columns=1,
               fill opacity=0.8,
               nodes={scale=0.7, transform shape},
               cells={align=left},
               opacity=0, 
           },
       legend cell align={left},
   ]

\addplot+ [mark=*, mark size=2pt, mark options={fill=color1},color=color1, line width=0.7pt,solid] coordinates { (0,0) };
\addlegendentry{LoRA}

\addplot+ [mark=*, mark size=2pt, mark options={fill=color2},color=color2, line width=0.7pt,solid] coordinates { (0,0) };
\addlegendentry{SinLoRA}

\addplot+ [mark=*, mark size=2pt, mark options={fill=color3},color=color3, line width=0.7pt,solid] coordinates { (0,0) };
\addlegendentry{Krona}

\addplot+ [mark=*, mark size=2pt, mark options={fill=color4},color=color4, line width=0.7pt,solid] coordinates { (0,0) };
\addlegendentry{RandLoRA}

\addplot+ [mark=*, mark size=2pt, mark options={fill=color5},color=color5, line width=0.7pt,solid] coordinates { (0,0) };
\addlegendentry{KRAdapter}

\addplot+ [mark=*, mark size=2pt, mark options={fill=color6},color=color6, line width=0.7pt,solid] coordinates { (0,0) };
\addlegendentry{FT}

\end{axis}
\end{tikzpicture}}   
    \subfloat[ViT-L/14]{\begin{tikzpicture}
    \begin{axis}[
        width=6cm,
        height=5cm,
        font=\tiny,
        ymin=74,
        ymax=91,
        legend style={draw=none,at={(0.55,1.3)}, 
                    text width=1.3cm,
                    anchor=north,
                    legend columns=6,
                    fill=none,
                nodes={scale=0.8, transform shape},},
        xlabel={\scriptsize{Shots}},
        ylabel={},
        ymajorgrids=true,
        ytick={52,56,60,64,68,74,76,78,80,82,84,86,88,90},
        yticklabel={$\tiny\pgfmathprintnumber{\tick}$},
        ylabel shift=-5pt,
        xlabel shift=-3pt,
        axis x line*=bottom,
        axis y line*=left,  
        symbolic x coords={1,4,16,50\%,100\%},
        cycle list={color1,color2,color3,color4,color5,color6,color7,color8,color9,color21},
        xtick={1,4,16,50\%,100\%},
        xticklabel={\textsc{\tick}},
        minor x tick num=1,
        xminorgrids,
        minor tick length=0,
        major x tick style = transparent,
        mark size=1.0pt,
        ]    

    \addplot+[mark=*,mark size=1pt,mark options={fill=color1},color=color1,draw opacity=1] coordinates {(1, 74.82)(4, 78.65)(16, 81.64)(50\%, 85.59)(100\%, 88.17)};

    \addplot+[mark=*,mark size=1pt,mark options={fill=color2},color=color2,draw opacity=1] coordinates {(1, 75.43)(4, 80.09)(16, 82.69)(50\%, 86.14)(100\%, 88.03)};

    \addplot+[mark=*,mark size=1pt,mark options={fill=color3},color=color3,draw opacity=1] coordinates {(1, 78.19)(4, 81.76)(16, 84.00)(50\%, 87.58)(100\%, 89.45)};

    \addplot+[mark=*,mark size=1pt,mark options={fill=color4},color=color4,draw opacity=1] coordinates {(1, 76.26)(4, 81.60)(16, 84.28)(50\%, 87.92)(100\%, 89.93)};
    
    \addplot+[mark=*,mark size=1pt,mark     options={fill=color5},color=color5,draw opacity=1] coordinates {(1, 76.39)(4, 81.97)(16, 85.14)(50\%, 88.79)(100\%, 90.46)};

    \addplot+[mark=*,mark size=1pt,mark options={fill=color6},color=color6,draw opacity=1] coordinates {(1, 77.39)(4, 80.96)(16, 84.97)(50\%, 87.91)(100\%, 90.03)};

\end{axis}
    \begin{axis}[
       xmin=0,
       xmax=16,
       ymin=1,
       ymax=2,
       hide axis,
       width=5cm,
       height=5cm,
       font=\small,
       mark size=1.0pt,
       legend style={
               at={(1.0,0.0)},
               anchor=south,
               draw=none,
               legend columns=1,
               fill opacity=0.8,
               nodes={scale=0.7, transform shape},
               cells={align=left},       
               opacity=0, 
           },
       legend cell align={left},
   ]

\addplot+ [mark=*, mark size=2pt, mark options={fill=color1},color=color1, line width=0.7pt,solid] coordinates { (0,0) };
\addlegendentry{LoRA}

\addplot+ [mark=*, mark size=2pt, mark options={fill=color2},color=color2, line width=0.7pt,solid] coordinates { (0,0) };
\addlegendentry{SinLoRA}

\addplot+ [mark=*, mark size=2pt, mark options={fill=color3},color=color3, line width=0.7pt,solid] coordinates { (0,0) };
\addlegendentry{Krona}

\addplot+ [mark=*, mark size=2pt, mark options={fill=color4},color=color4, line width=0.7pt,solid] coordinates { (0,0) };
\addlegendentry{RandLoRA}

\addplot+ [mark=*, mark size=2pt, mark options={fill=color5},color=color5, line width=0.7pt,solid] coordinates { (0,0) };
\addlegendentry{KRAdapter}

\addplot+ [mark=*, mark size=2pt, mark options={fill=color6},color=color6, line width=0.7pt,solid] coordinates { (0,0) };
\addlegendentry{FT}

\end{axis}
\end{tikzpicture}}    
    \subfloat[ViT-H/14]{\begin{tikzpicture}
    \begin{axis}[
        width=6cm,
        height=5cm,
        font=\tiny,
        ymin=77,
        ymax=91,
        legend style={draw=none,at={(0.55,1.3)}, 
                    text width=1.3cm,
                    anchor=north,
                    legend columns=6,
                    fill=none,
                nodes={scale=0.8, transform shape},},
        xlabel={\scriptsize{Shots}},
        ylabel={},
        ymajorgrids=true,
        ytick={52,56,60,64,68,72,76,80,82,84,86,88,90,92},
        yticklabel={$\tiny\pgfmathprintnumber{\tick}$},
        ylabel shift=-5pt,
        xlabel shift=-3pt,
        axis x line*=bottom,
        axis y line*=left,  
        symbolic x coords={1,4,16,50\%,100\%},
        cycle list={color1,color2,color3,color4,color5,color6,color7,color8,color9,color21},
        xtick={1,4,16,50\%,100\%},
        xticklabel={\textsc{\tick}},
        minor x tick num=1,
        xminorgrids,
        minor tick length=0,
        major x tick style = transparent,
        mark size=1.0pt,
        ]

\addplot+[mark=*,mark size=1pt,mark options={fill=color1},color=color1,draw opacity=1] coordinates {(1, 79.82)(4, 80.91)(16, 83.00)(50\%, 86.39)(100\%, 88.51)};

\addplot+[mark=*,mark size=1pt,mark options={fill=color2},color=color2,draw opacity=1] coordinates {(1, 79.96)(4, 82.59)(16, 84.66)(50\%, 86.49)(100\%, 88.22)};

\addplot+[mark=*,mark size=1pt,mark options={fill=color3},color=color3,draw opacity=1] coordinates {(1, 79.74)(4, 84.03)(16, 86.68)(50\%, 89.62)(100\%, 90.81)};

\addplot+[mark=*,mark size=1pt,mark options={fill=color4},color=color4,draw opacity=1] coordinates {(1, 81.40)(4, 84.19)(16, 86.52)(50\%, 89.48)(100\%, 90.83)};

\addplot+[mark=*,mark size=1pt,mark options={fill=color5},color=color5,draw opacity=1] coordinates {(1, 81.18)(4, 84.75)(16, 87.10)(50\%, 89.62)(100\%, 90.76)};

\addplot+[mark=*,mark size=1pt,mark options={fill=color6},color=color6,draw opacity=1] coordinates {(1, 77.39)(4, 80.96)(16, 84.97)(50\%, 87.91)(100\%, 90.03)};

\end{axis}
    \begin{axis}[
       xmin=0,
       xmax=16,
       ymin=1,
       ymax=2,
       hide axis,
       width=5cm,
       height=5cm,
       font=\small,
       mark size=1.0pt,
       legend style={
               at={(1,0.0)},
               anchor=south,
               draw=none,
               legend columns=1,
               fill opacity=0.8,
               nodes={scale=0.7, transform shape},
               cells={align=left},
           },
       legend cell align={left},
   ]

\addplot+ [mark=*, mark size=2pt, mark options={fill=color1},color=color1, line width=0.7pt,solid] coordinates { (0,0) };
\addlegendentry{LoRA}

\addplot+ [mark=*, mark size=2pt, mark options={fill=color2},color=color2, line width=0.7pt,solid] coordinates { (0,0) };
\addlegendentry{SinLoRA}

\addplot+ [mark=*, mark size=2pt, mark options={fill=color3},color=color3, line width=0.7pt,solid] coordinates { (0,0) };
\addlegendentry{Krona}

\addplot+ [mark=*, mark size=2pt, mark options={fill=color4},color=color4, line width=0.7pt,solid] coordinates { (0,0) };
\addlegendentry{RandLoRA}

\addplot+ [mark=*, mark size=2pt, mark options={fill=color5},color=color5, line width=0.7pt,solid] coordinates { (0,0) };
\addlegendentry{KRAdapter}

\addplot+ [mark=*, mark size=2pt, mark options={fill=color6},color=color6, line width=0.7pt,solid] coordinates { (0,0) };
\addlegendentry{FT}

\end{axis}
\end{tikzpicture}}
    \caption{Tuning CLIP's vision and language backbone for image classification. Accuracy (\%) averaged over 11 datasets.}
    \label{fig:results-clip-vision}
\end{figure*}

\begin{table*}[h]
\centering
\setlength{\tabcolsep}{2pt}
\small
\begin{tabular}{lcccccccccccc}
\toprule
Method & \multicolumn{4}{c}{ViT-B/32} & \multicolumn{4}{c}{ViT-L/14} & \multicolumn{4}{c}{ViT-H/14} \\
\cmidrule(r){1-1}\cmidrule(r){2-5}\cmidrule(r){6-9}\cmidrule(r){10-13}
& Nat. & Spe. & Struc. & Avg. & Nat. & Spe. & Struc. & Avg. & Nat. & Spe. & Struc. & Avg. \\
\cmidrule(r){1-1}\cmidrule(r){2-5}\cmidrule(r){6-9}\cmidrule(r){10-13}
LoRA & 70.6 & 82.6 & 45.2 & 66.1 & 82.3 & 87.0 & 54.7 & 74.6 & 84.1 & 86.7 & 56.8 & 75.9 \\
SinLoRA & 72.0 & 83.1 & 53.1 & 69.4 & 82.5 & 86.8 & 56.8 & 75.4  & 84.2 & 86.9 & 58.2 & 76.4  \\
RandLoRA  & 74.2 & 82.4 & 53.1 & 69.9 & 82.8 & 87.0 & \textbf{58.3} & 76.0 & 83.9 & \textbf{87.3} & \textbf{59.5} & 76.9 \\
Krona & 73.7 & 83.4 & 53.3 & 70.1 & 83.4 & \textbf{87.4} & 57.5 & 76.1 & 84.9 & 86.7 & 58.0 & 76.6 \\
KRAdapter & \textbf{76.0} & \textbf{84.0} & \textbf{53.3} & \textbf{71.1} & \textbf{84.8} & 86.9 & 57.2 & \textbf{76.3} & \textbf{85.8} & 87.0 & 58.1 & \textbf{77.0} \\
\midrule
FT & 75.7 & 83.5 & 53.1 & 70.8 & 83.2 & 87.4 & 58.3 & 76.3 & 85.4 & 84.8 & 43.7 & 71.3 \\
\bottomrule
\end{tabular}
\caption{Adaptation performance on VTAB1k's structured datasets}
\label{tab:vtab1k}
\end{table*}

\subsection{Vision-language tasks}
\subsubsection{Common datasets}
We now move on to real data and compare the performance of several full-rank PEFT algorithms when applied to fine-tuning CLIP models~\cite{2021_ICML_CLIP} (both vision and language encoders) across a diverse suite of eleven image classification datasets (detailed in Appendix~\ref{app:classifapp}). We evaluate Vision Transformer-based (ViT) architectures, initialized with publicly available openCLIP weights~\cite{2023_CVPR_openclip}. The PEFT algorithms are only trained on the attention heads. As an indicative, we also report the performance of full fine-tuning (FT), where all parameters of the network are updated. We tune the hyper-parameter of each method to achieve optimal performance.

Experiments are conducted in few-shot settings (1, 4, and 16 shots) and with varying dataset sizes (50\% and 100\% of the available training data). The training data is exactly the same for each algorithm to ensure fair comparisons. Results are visualized in Figure~\ref{fig:results-clip-vision} with detailed results in Table~\ref{tab:image_class} in the appendix.

Our results confirm the observations of~\cite{2025_ICLR_RandLoRA} about the importance of full-rank when fine-tuning vision-language models as these algorithms largely outperform LoRA with equal trainable parameters. We also confirm that full-rank PEFT algorithms can improve performance over standard fine-tuning by limiting over-fitting for larger models (ViT-H/14 especially). KRAdapter's enhanced representational capacity translates to a further performance improvements compared to other full-rank PEFT algorithms. This performance advantage is observed across most settings, although we observe a performance saturation for $\geq 50\%$ data settings for the larger ViT-H/14 architecture. KRAdapter's row-wise Kronecker formulation is additionally systematically superior to Krona's direct Kronecker approach.

\begin{table*}[htbp]
\centering
\setlength{\tabcolsep}{2pt}
\small
\begin{tabular}{lccccccccccccccccc}
\toprule
& \multicolumn{5}{c}{ViT-B/32} & \multicolumn{5}{c}{ViT-L/14} & \multicolumn{5}{c}{ViT-H/14} \\
\cmidrule(r){2-6}\cmidrule(l){7-11}\cmidrule(l){12-16}
& ID & OOD & $r_{gen}\uparrow$ & Nuc$\downarrow$ & Fro$\downarrow$ & ID & OOD & $r_{gen}\uparrow$ & Nuc$\downarrow$ & Fro$\downarrow$ & ID & OOD & $r_{gen}\uparrow$ & Nuc$\downarrow$ & Fro$\downarrow$ \\
\cmidrule(r){2-6}\cmidrule(l){7-11}\cmidrule(l){12-16}
Zero-shot & 62.6 & 55.5 & n/a & n/a & n/a & 75.4 & 74.1 & n/a & n/a & n/a & 77.9 & 75.7 & n/a & n/a & n/a \\
\midrule
LoRA & 72.9 & 58.2 & 0.27 & 19.6 & 4.1 & 83.3 & 77.2 & 0.39 & 46.2 & 6.6 & 83.7 & 75.9 & 0.04 & 70.0 & 7.4 \\
SinLoRA & 72.8 & 58.6 & 0.31 & 223.5 & 4.7 & 82.8 & 76.3 & 0.30 & 61.6 & 7.1 &  83.6 & 77.1 & 0.25 & 90.6 & 8.1 \\
RandLoRA & \textbf{73.0} & 58.4 & 0.28 & 46.2 & 5.7 & 82.8 & 76.2 & 0.29 & 159.9 & 11.8 & 82.9 & 75.4 & -0.05 & 508.3 & 21.2\\
Krona & 72.4 & 58.6 & 0.32 & 44.1 & 4.9 & \textbf{84.1} & 77.8 & 0.42 & 42.7 & 5.0 & \textbf{85.0} & 78.2 & 0.36 & 123.3 & 9.2 \\
KRAdapter & 72.5 & \textbf{59.9} & \textbf{0.45} & \textbf{7.3} & \textbf{2.3} & 83.6 & \textbf{78.2} & \textbf{0.49} & \textbf{9.8} & \textbf{2.8} & 84.7 & \textbf{78.8} & \textbf{0.48} & \textbf{32.8} & \textbf{5.5} \\
\midrule
FT & 75.5 & 58.9 & 0.26 & 0.7 & 0.8 & 85.1 & 76.7 & 0.27 & 1.1 & 1.0 & 85.5 & 78.6 & 0.39 & 4.3 & 1.9 \\
\bottomrule
\end{tabular}
\caption{Generalization to distribution shift. We report ID accuracy on ImageNet, OOD accuracy averaged over 7 OOD datasets, the $r_{gen}$ ratio and the nuclear and frobenius norm of the updates. The $\uparrow$ and $\downarrow$ indicates if higher or lower scores are better.}
\label{tab:oodgen}
\end{table*}

\subsubsection{Specialized datasets}
To further investigate performance on tasks necessitating strong model adaptation, we conduct experiments using the Visual Task Adaptation Benchmark (VTAB1k)~\cite{2019_arxiv_vtab1k} (detailed in Appendix~\ref{app:vtab1kdatasets}). VTAB1k contains 19 datasets grouped into 3 categories. The structured and specialized subsets as especially interesting to us as they contain tasks less likely to be encountered during CLIP's pre-training. The specialized subset focuses on predicting specialist medical or satellite imagery while the structured subset aims to predict object state attributes such as distance, location, count, and rotation. Prior work has shown that CLIP exhibits limitations on these types of tasks~\cite{2022_ACL_reclip}, making them an ideal benchmark for assessing the enhanced representation learning capacity of full-rank adaptation algorithms. The results are presented in Table~\ref{tab:vtab1k}, where we report average accuracy over the 3 subsets (detailed results are available in appendix~\ref{app:vtabdetail}). We observe that KRAdapter performs especially well on the natural and structured subsets but can struggle to generalize as well as RandLoRA to the structured subset for the larger models. Standard fine-tuning also struggles to generalize to the structured subset for ViT-H/14 which may indicate an implicit regularization in RandLoRA preventing overfitting, thus allowing to perform better in this specific setting. When averaged over the whole benchmark however, KRAdapter performs better across all architectures.

\begin{table*}[htbp]
\centering
\setlength{\tabcolsep}{3pt}
\small
\begin{tabular}{lcccccccccccc}
\toprule
Method & SIQA & ArcE & ArcC & OBQA & HellaS  & BoolQ & PIQA & WinoG & \textbf{ID} & \textbf{NID} & \textbf{OOD} & \textbf{Avg.} \\
\midrule
& \multicolumn{9}{c}{LLama3.1-8B} \\
\midrule
Zero-shot & 20.73 & 26.18 & 22.35 & 16.00 & 25.69 & 43.46 & 45.87 & 17.60 & 21.32 & 25.69 & 35.64 & 27.23 \\ 
LoRA & 79.07 & 87.12 & 74.66 & 83.40 & 53.70 & 42.35 & 76.61 & 49.80 & 81.06 & 53.70 & 55.62 & 68.34 \\
DoRA   & 79.79 & 88.22 & 77.56 & 84.40 & 52.16 & 41.77 & 79.87 & \textbf{57.38} & 82.49 & 52.16 & 57.80 & 70.14 \\
SinLoRA & 76.46 & \textbf{93.14} & \textbf{86.86} & \textbf{88.40} & 51.57 & 43.94 & \textbf{84.77} & 46.57 & \textbf{86.21} & 51.57 & 58.43 & 71.46 \\
RandLoRA & \textbf{79.79} & 87.67 & 76.45 & 82.60 & 57.22 & 52.51 & 79.22 & 54.38 & 81.63 & \textbf{57.22} & 60.83 & 71.23 \\
Krona  & 79.17 & 87.96 & 75.43 & 82.40 & 50.07 & \textbf{59.14} & 79.60 & 56.67 & 81.24 & 50.07 & 61.37 & 71.31 \\
KRAdapter  & 79.27 & 90.32 & 78.16 & 82.40 & \textbf{55.34} & 58.23 & 80.90 & 54.85 & 82.54 & 55.34 & \textbf{64.66} & \textbf{72.43} \\
\midrule
& \multicolumn{9}{c}{Qwen2.5-7B} \\
\midrule
Zero-shot & 5.02 & 5.18 & 4.10 & 4.80 & 1.79 & 31.44 & 4.73 & 27.70 & 4.77 & 1.79 & 21.29 & 10.59 \\
LoRA   & 51.13 & 92.13 & 85.84 & 88.40 & 71.01 & 43.55 & 83.79 & 40.81 & 79.37 & 71.01 & 59.79 & 69.58 \\
DoRA &  77.94 & 87.92 & 79.52 & 87.80 & 77.95 & 57.22 & 85.64 & \textbf{61.64} & 83.30 & 77.95 & 70.61 & 76.95 \\
SinLoRA  & 77.33 & 88.93 & 74.83 & 85.40 & 58.61 & 35.26 & 81.34 & 54.78 & 81.62 & 58.61 & 57.12 & 69.56 \\
RandLoRA  & \textbf{79.79} & \textbf{94.87} & 87.63 & 89.00 & 78.20 & 47.40 & \textbf{85.69} & 60.85 & 87.82 & 78.20 & 68.04 & 77.93 \\
Krona & 78.30 & 93.81 & \textbf{88.14} & \textbf{91.60} & \textbf{80.93} & 49.05 & 85.42 & 57.54 & \textbf{87.96} & \textbf{80.93} & 68.23 & 78.10 \\
KRAdapter & \textbf{79.79} & 94.61 & 86.95 & 89.80 & 75.31 & \textbf{62.14} & 84.66 & 61.01 & 87.79 & 75.31 & \textbf{70.78} & \textbf{79.28} \\
\bottomrule
\end{tabular}
\caption{4bit quantized LLama3.1-8B and Qwen2.5-7B on commonsense reasoning tasks. We finetune the Key and Value matrices of the attention layers. We evaluate on in-distribution (ID), near in-distribution (NID) and out-of-distribution (OOD) test sets and bold the best results.}
\label{tab:commsense}
\end{table*}

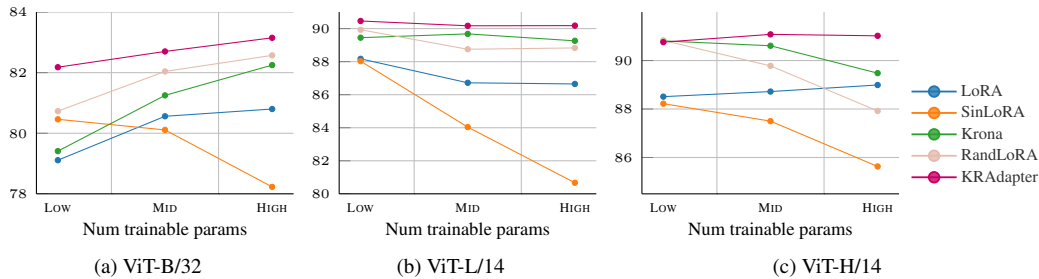
\begin{figure*}
    \centering
    \subfloat[ViT-B/32]{\begin{tikzpicture}
    \begin{axis}[
        width=5cm,
        height=4cm,
        font=\tiny,
        ymin=78,
        ymax=84,
        legend style={draw=none,at={(0.55,1.3)}, 
                    text width=1.3cm,
                    anchor=north,
                    legend columns=6,
                    fill=none,
                nodes={scale=0.8, transform shape},},
        xlabel={\scriptsize{Num trainable params}},
        ylabel={},
        ymajorgrids=true,
        ytick={52,56,60,64,68,74,76,78,80,82,84,86,88,90},
        yticklabel={$\tiny\pgfmathprintnumber{\tick}$},
        ylabel shift=-5pt,
        xlabel shift=-3pt,
        axis x line*=bottom,
        axis y line*=left,  
        symbolic x coords={Low, Mid, High},
        cycle list={color1,color2,color3,color4,color5,color6,color7,color8,color9,color21},
        xtick={Low, Mid, High},
        xticklabel={\textsc{\tick}},
        minor x tick num=1,
        xminorgrids,
        minor tick length=0,
        major x tick style = transparent,
        mark size=1.0pt,
        ]

    \addplot+[mark=*,mark size=1pt,mark options={fill=color1},color=color1,draw opacity=1] coordinates {(Low, 79.11)(Mid, 80.56)(High, 80.80)};
    
    \addplot+[mark=*,mark size=1pt,mark options={fill=color2},color=color2,draw opacity=1] coordinates {(Low, 80.46)(Mid, 80.11)(High, 78.23)};

    \addplot+[mark=*,mark size=1pt,mark options={fill=color3},color=color3,draw opacity=1] coordinates {(Low, 79.41)(Mid, 81.25)(High, 82.25)};

    \addplot+[mark=*,mark size=1pt,mark options={fill=color4},color=color4,draw opacity=1] coordinates {(Low, 80.73)(Mid, 82.04)(High, 82.57)};

    \addplot+[mark=*,mark size=1pt,mark options={fill=color5},color=color5,draw opacity=1] coordinates {(Low, 82.18)(Mid, 82.70)(High, 83.15)};

\end{axis}
    \begin{axis}[
       xmin=0,
       xmax=16,
       ymin=1,
       ymax=2,
       hide axis,
       width=5cm,
       height=5cm,
       font=\small,
       mark size=1.0pt,
       legend style={
               at={(0.7,0.0)},
               anchor=south,
               draw=none,
               legend columns=1,
               fill opacity=0.8,
               nodes={scale=0.7, transform shape},
               cells={align=left},
                opacity=0, 
           },
       legend cell align={left},
   ]

\addplot+ [mark=*, mark size=2pt, mark options={fill=color1},color=color1, line width=0.7pt,solid] coordinates { (0,0) };
\addlegendentry{LoRA}

\addplot+ [mark=*, mark size=2pt, mark options={fill=color2},color=color2, line width=0.7pt,solid] coordinates { (0,0) };
\addlegendentry{SinLoRA}

\addplot+ [mark=*, mark size=2pt, mark options={fill=color3},color=color3, line width=0.7pt,solid] coordinates { (0,0) };
\addlegendentry{Krona}

\addplot+ [mark=*, mark size=2pt, mark options={fill=color4},color=color4, line width=0.7pt,solid] coordinates { (0,0) };
\addlegendentry{RandLoRA}

\addplot+ [mark=*, mark size=2pt, mark options={fill=color5},color=color5, line width=0.7pt,solid] coordinates { (0,0) };
\addlegendentry{KRAdapter}

\end{axis}
\end{tikzpicture}}   
    \subfloat[ViT-L/14]{\begin{tikzpicture}
    \begin{axis}[
        width=5cm,
        height=4cm,
        font=\tiny,
        ymin=80.0,
        ymax=91,
        legend style={draw=none,at={(0.55,1.3)}, 
                    text width=1.3cm,
                    anchor=north,
                    legend columns=6,
                    fill=none,
                nodes={scale=0.8, transform shape},},
        xlabel={\scriptsize{Num trainable params}},
        ylabel={},
        ymajorgrids=true,
        ytick={52,56,60,64,68,74,76,78,80,82,84,86,88,90},
        yticklabel={$\tiny\pgfmathprintnumber{\tick}$},
        ylabel shift=-5pt,
        xlabel shift=-3pt,
        axis x line*=bottom,
        axis y line*=left,  
        symbolic x coords={Low, Mid, High},
        cycle list={color1,color2,color3,color4,color5,color6,color7,color8,color9,color21},
        xtick={Low, Mid, High},
        xticklabel={\textsc{\tick}},
        minor x tick num=1,
        xminorgrids,
        minor tick length=0,
        major x tick style = transparent,
        mark size=1.0pt,
        ]

    \addplot+[mark=*,mark size=1pt,mark options={fill=color1},color=color1,draw opacity=1] coordinates {(Low, 88.17)(Mid, 86.72)(High, 86.65)};    
    
    \addplot+[mark=*,mark size=1pt,mark options={fill=color2},color=color2,draw opacity=1] coordinates {(Low, 88.03)(Mid, 84.04)(High, 80.67)};

    \addplot+[mark=*,mark size=1pt,mark options={fill=color3},color=color3,draw opacity=1] coordinates {(Low, 89.45)(Mid, 89.68)(High, 89.26)};

    \addplot+[mark=*,mark size=1pt,mark options={fill=color4},color=color4,draw opacity=1] coordinates {(Low, 89.93)(Mid, 88.75)(High, 88.83)};
    
    \addplot+[mark=*,mark size=1pt,mark     options={fill=color5},color=color5,draw opacity=1] coordinates {(Low, 90.46)(Mid, 90.17)(High, 90.18)};

\end{axis}
    \begin{axis}[
       xmin=0,
       xmax=16,
       ymin=1,
       ymax=2,
       hide axis,
       width=5cm,
       height=5cm,
       font=\small,
       mark size=1.0pt,
       legend style={
               at={(0.7,0.0)},
               anchor=south,
               draw=none,
               legend columns=1,
               fill opacity=0.8,
               nodes={scale=0.7, transform shape},
               cells={align=left},
                opacity=0, 
           },
       legend cell align={left},
   ]

\addplot+ [mark=*, mark size=2pt, mark options={fill=color1},color=color1, line width=0.7pt,solid] coordinates { (0,0) };
\addlegendentry{LoRA}

\addplot+ [mark=*, mark size=2pt, mark options={fill=color2},color=color2, line width=0.7pt,solid] coordinates { (0,0) };
\addlegendentry{SinLoRA}

\addplot+ [mark=*, mark size=2pt, mark options={fill=color3},color=color3, line width=0.7pt,solid] coordinates { (0,0) };
\addlegendentry{Krona}

\addplot+ [mark=*, mark size=2pt, mark options={fill=color4},color=color4, line width=0.7pt,solid] coordinates { (0,0) };
\addlegendentry{RandLoRA}

\addplot+ [mark=*, mark size=2pt, mark options={fill=color5},color=color5, line width=0.7pt,solid] coordinates { (0,0) };
\addlegendentry{KRAdapter}

\end{axis}
\end{tikzpicture}}    
    \subfloat[ViT-H/14]{\begin{tikzpicture}
    \begin{axis}[
        width=5cm,
        height=4cm,
        font=\tiny,
        ymin=84.5,
        ymax=92,
        legend style={draw=none,at={(0.55,1.3)}, 
                    text width=1.3cm,
                    anchor=north,
                    legend columns=6,
                    fill=none,
                nodes={scale=0.8, transform shape},},
        xlabel={\scriptsize{Num trainable params}},
        ylabel={},
        ymajorgrids=true,
        ytick={52,56,60,64,68,74,76,78,80,82,84,86,88,90},
        yticklabel={$\tiny\pgfmathprintnumber{\tick}$},
        ylabel shift=-5pt,
        xlabel shift=-3pt,
        axis x line*=bottom,
        axis y line*=left,  
        symbolic x coords={Low, Mid, High},
        cycle list={color1,color2,color3,color4,color5,color6,color7,color8,color9,color21},
        xtick={Low, Mid, High},
        xticklabel={\textsc{\tick}},
        minor x tick num=1,
        xminorgrids,
        minor tick length=0,
        major x tick style = transparent,
        mark size=1.0pt,
        ]

\addplot+[mark=*,mark size=1pt,mark options={fill=color1},color=color1,draw opacity=1] coordinates {(Low, 88.51)(Mid, 88.72)(High, 88.99)};

\addplot+[mark=*,mark size=1pt,mark options={fill=color2},color=color2,draw opacity=1] coordinates {(Low, 88.22)(Mid, 87.50)(High, 85.63)};

\addplot+[mark=*,mark size=1pt,mark options={fill=color3},color=color3,draw opacity =1] coordinates {(Low, 90.81)(Mid, 90.61)(High, 89.48)};

\addplot+[mark=*,mark size=1pt,mark options={fill=color4},color=color4,draw opacity=1] coordinates {(Low, 90.83)(Mid, 89.78)(High, 87.92)};

\addplot+[mark=*,mark size=1pt,mark options={fill=color5},color=color5,draw opacity=1] coordinates {(Low, 90.76)(Mid, 91.08)(High, 91.02)};

\end{axis}
    \begin{axis}[
       xmin=0,
       xmax=16,
       ymin=1,
       ymax=2,
       hide axis,
       width=5cm,
       height=5cm,
       font=\small,
       mark size=1.0pt,
       legend style={
               at={(1.3,0.0)},
               anchor=south,
               draw=none,
               legend columns=1,
               fill opacity=0.8,
               nodes={scale=0.7, transform shape},
               cells={align=left},
           },
       legend cell align={left},
   ]

\addplot+ [mark=*, mark size=2pt, mark options={fill=color1},color=color1, line width=0.7pt,solid] coordinates { (0,0) };
\addlegendentry{LoRA}

\addplot+ [mark=*, mark size=2pt, mark options={fill=color2},color=color2, line width=0.7pt,solid] coordinates { (0,0) };
\addlegendentry{SinLoRA}

\addplot+ [mark=*, mark size=2pt, mark options={fill=color3},color=color3, line width=0.7pt,solid] coordinates { (0,0) };
\addlegendentry{Krona}

\addplot+ [mark=*, mark size=2pt, mark options={fill=color4},color=color4, line width=0.7pt,solid] coordinates { (0,0) };
\addlegendentry{RandLoRA}

\addplot+ [mark=*, mark size=2pt, mark options={fill=color5},color=color5, line width=0.7pt,solid] coordinates { (0,0) };
\addlegendentry{KRAdapter}

\end{axis}
\end{tikzpicture}}
    \caption{Increasing the number of trainable parameters in PEFT algorithms. Accuracy (\%) averaged over 11 datasets.}
    \label{fig:results-scaling}
\end{figure*}

\subsubsection{Out-of-Distribution Generalization\label{sec:oodgen}}
We expect that the balanced singular value distribution of KRAdapter will allow for better out-of-distribution (OOD) robustness at test time. We thus propose to investigate whether parameter-efficient adaptations, optimized on in-distribution (ID) data, exhibit robustness and transferability to out-of-distribution (OOD) scenarios. We utilize a suite of established OOD benchmark datasets: ImageNet-A~\cite{2021_CVPR_ImageNetA}, ImageNet-Sketch~\cite{2019_NeurIPS_ImageNetSketch}, ImageNet-R~\cite{2021_ICCV_ImageNetR}, ImageNet-V2~\cite{2019_ICML_ImageNetV2}, and CIFAR-100~\cite{2009_CIFAR}. These datasets encompass diverse distribution shifts: adversarial, stylized, renditions and  domain shift, details in appendix~\ref{app:ooddatasets}. To quantify generalization, we introduce the ratio $r_{gen} = \frac{\Delta_{ood}}{\Delta_{id}}$, measuring the relative OOD accuracy gain ($\Delta_{ood}$, averaged over OOD datasets) to ID gain ($\Delta_{id}$, ImageNet validation) over the zero-shot model. Table~\ref{tab:oodgen} reports the results where we fine-tune the attentions layers of the vision backbone of CLIP only, maintaining frozen language embeddings to classify. KRAdapter consistently achieves a higher $r_{gen}$ than other PEFT methods. We hypothesize KRAdapter's spectral properties enable it to efficiently work in the parameter space without overfitting to a subset of directions, unlike rank-constrained methods prone to overfitting dominant ID features. To validate this hypothesis and quantify how much the algorithms strays from the pre-trained weights, we evaluate the average Nuclear and Frobenius norm of the learned updates over the attention layers. We find that KRAdapter produces updates with smaller norms than other PEFT algorithms which is in line with the very small norm of the traditionally fine-tuned model and has been previously reported to be beneficial for OOD generalization~\cite{2017_arxiv_spectralnormreg, 2025_ICLR_spectralregcontinual}. Table~\ref{tab:visionrank} in the appendix further reports the effective rank obtained when fine-tuning the vision-language architectures where we observe that KRAdapter systematically leads to higher effective ranks.

\subsection{Commonsense reasoning}
We further investigate resource-efficient fine-tuning of Large Language Models (LLMs) for commonsense reasoning through 4-bit quantization-aware training and evaluation. We evaluate this approach on Llama3.1-8B and Qwen2.5-7B, which are well suited to consumer grade GPUs.  
We fine-tune the key and value projection matrices within the attention layers, optimizing hyper-parameters for each algorithm to maximize performance. To specifically evaluate out-of-distribution (OOD) generalization, we slightly modify the commonsense reasoning benchmark established in previous research~\cite{2023_arXiv_llmadapters,2024_ICML_DoRA}. While the standard commonsense benchmark trains and tests on the same tasks with identical question and answer formats, we train on the multi-choice questions with "answer\{1\dots5\}" format (Science-QA, ARC, and OpenBookQA) and subsequently evaluate on: (1) in-distribution test sets from these training datasets; (2) near in-distribution HellaSwag test set (multi-choice, "ending\{1\dots4\}" format); and (3) out-of-distribution BoolQ, PiQA, and WinoGrande test sets (binary answers). This evaluation framework is designed to comprehensively assess the generalization capabilities of each PEFT algorithm, a crucial attribute for deploying robust LLMs across diverse real-world scenarios. Further details about our commonsense benchmark are available in appendix~\ref{app:commonsense}.

Table~\ref{tab:commsense} presents a comparative analysis of the PEFT methods. The results demonstrate a clear distinction: rank-restricted algorithms, such as LoRA, exhibit limited generalization to out-of-distribution (OOD) datasets. In contrast, full-rank algorithms significantly enhance the zero-shot model to answer these OOD tasks despite not specifically training for them. Notably, KRAdapter outperforms state-of-the-art methods across architectures. These findings support the conclusions of Section~\ref{sec:oodgen}, reinforcing evidence of the efficacy of KRAdapter in learning generalizable representations for language models.

\subsection{Ablation study}
We are interested here in challenging our design choices for KRAdapter and evaluating the impact of scaling the number of trainable parameters further. 
We study 3 scenario named low, medium and high where we iteratively double the amount of trainable parameters from the base (low) configuration used in other experiments. In KRAdapter, this is achieved by setting $k_1=\{d_{out}^{\frac{1}{2}},d_{out}^{\frac{1}{3}},d_{out}^{\frac{1}{4}}\}$. Ideally, performance should increase with every parameter increase to suggest favorable scaling laws. Results are reported in Figure~\ref{fig:results-scaling} for the vision-language task. We find that for smaller models, full-rank methods scale better with increases in parameter budgets whereas LoRA saturates faster. For larger models however, performance does not increase beyond the low configuration which ties in with the observed saturation in larger models of Figure~\ref{fig:results-clip-vision}. This aligns with findings of Albert~\etal~\cite{2025_ICLR_RandLoRA}. SinLoRA is a notable exception as more parameters leads to a decrease in accuracy. For ViT-H/14, KRAdapter is the only full rank PEFT algorithm to not drop in accuracy with increasing budget. This highlight the robustness of KRAdapter to over-fitting which helps it in achieving good OOD generalization. 

\section{Limitations}
Although we have shown KRAdapter is a good alternative to LoRA when high effective rank or out-of-distribution generalization is important, we discuss here some limitations of our approach. First KRAdapter in its most parameter-efficient configuration trains more parameter than the most efficient LoRA configurations do. In practice we find that KRAdatper is equivalent to a rank 16 to 32 LoRA, a commonly used configuration. Exploring low-rank approximations for the matrices $U$ and $V$ prior to the Khatri-Rao product could potentially mitigate this, though the theoretical implications for the derived full-rank properties in Section~\ref{sec:fullrank} require rigorous investigation. Second, empirical evidence from controlled matrix estimation experiments indicates that KRAdapter's performance may be suboptimal when estimating matrices of extremely low rank.  In scenarios demanding strict subspace constraints on weight updates, such as continual learning paradigms, LoRA might exhibit superior suitability. Finally, while KRAdapter consistently demonstrates robust out-of-distribution generalization and faster training convergence, we observe instances with larger models where RandLoRA achieves marginally superior performance on purely in-distribution test sets. It is important to note that KRAdapter's in-distribution performance remains highly competitive in these cases, and its advantages in generalization and training efficiency compared to RandLoRA generally outweigh this minor performance delta.

\section{Conclusion}
This paper introduces Khatri-Rao Adapters (KRAdapter), a novel parameter-efficient fine-tuning technique designed to overcome the representation limitations of low-rank methods. KRAdapter generate full-rank update matrices with demonstrably higher effective rank, enabling to capture more complex feature interactions and improving OOD generalization by staying close to the pre-trained weights. KRAdapter achieves all this without sacrificing parameter efficiency and training time.
Our comprehensive experimental evaluation across vision-language, language understanding, and commonsense reasoning tasks consistently demonstrates KRAdapter produces improved results compared to LoRA and other full-rank PEFT alternatives, especially for specialized VTAB1k tasks that demand nuanced feature adaptation and for larger models that are prone to over-fitting. We specifically highlight that KRAdapter's improved representation space allows to produce representations closer to the zero-shot model that lead to better generalization under distribution shifts compared to other PEFT algorithms.
Crucially, KRAdapter maintains the computational advantages of LoRA in terms of training speed and memory footprint, offering a practical and effective solution for fine-tuning billion-scale models as demonstrated in the commonsense experiments. A notable limitation of KRAdapter, however, is that the minimum number of trainable parameter is larger than the most efficient LoRA configurations which currently renders it unadapted to extreme scarcity scenarios. Future work could investigate replacing $U$ and $V$ with low-rank formulations to achieve better efficiency. We however anticipate this formulation would change the efficient rank guarantees of KRAdapter.
Our findings underscore the importance of high effective rank weight updates in parameter-efficient adaptation of pre-trained models across diverse downstream applications. 

{
    \small
    \bibliographystyle{ieeenat_fullname}
    \bibliography{main}
}

\newpage
\appendix
\onecolumn

\section{Mathematical supplement\label{app:theorems}}
\subsection{Minimizing the trainable parameters in KRAdapter\label{app:minimath}}

\paragraph{Khatri-Rao products:} Our main theorem is that an arbitrary matrix $W$ of size 
$m \times n$ can be approximated by the Khatri-Rao product of two matrices. In order to see how to do this we will need the column-wise vectorization operator $\mathbf{vec}$. 

\begin{definition}
Let $\mathbf{vec}$ denote the column-wise vectorization operator defined as follows. Given a matrix $A = [A_1\cdots A_n]$ of size $m \times n$, where each $A_i$ has size $m \times 1$, we define $\mathbf{vec}(A)$ to be the matrix of size $mn \times 1$ defined by
\begin{equation}
\mathbf{vec}(A) = \begin{bmatrix}
A_1 \\
\vdots \\
A_n
\end{bmatrix}    
\end{equation}
\end{definition}

\begin{theorem}\label{thm:KR_decomp}
Let $W$ be a matrix of size $m \times n$ such that 
$rank(W) = r$. Then there exists matrices $\overline{U}$ of size $m \times r$ and $\overline{V}$ of size $n \times r$ and a vector $\sigma$ of size $r \times 1$ such that
\begin{equation}
\mathbf{vec}(W) = (\overline{V} \odot \overline{U})\sigma. 
\end{equation}
\end{theorem}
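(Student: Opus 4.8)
The plan is to construct $\overline{U}$, $\overline{V}$, and $\sigma$ directly from a compact singular value decomposition of $W$, and then verify the claimed identity by applying the standard vectorization rule for rank-one matrices term by term. First I would write the compact SVD: since $\mathrm{rank}(W)=r$, there exist left singular vectors $\mathbf{a}_1,\dots,\mathbf{a}_r \in \mathbb{R}^m$, right singular vectors $\mathbf{b}_1,\dots,\mathbf{b}_r \in \mathbb{R}^n$, and positive singular values $\sigma_1,\dots,\sigma_r$ such that $W = \sum_{i=1}^{r}\sigma_i\,\mathbf{a}_i\mathbf{b}_i^{\top}$. I then define $\overline{U}=[\mathbf{a}_1\cdots\mathbf{a}_r]$ of size $m\times r$, $\overline{V}=[\mathbf{b}_1\cdots\mathbf{b}_r]$ of size $n\times r$, and $\sigma=(\sigma_1,\dots,\sigma_r)^{\top}$ of size $r\times 1$, which are exactly the shapes required by the statement.

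The second step invokes the elementary vectorization identity $\mathbf{vec}(\mathbf{a}\mathbf{b}^{\top}) = \mathbf{b}\otimes\mathbf{a}$ for the column-wise operator: the $j$-th column of $\mathbf{a}\mathbf{b}^{\top}$ is $b_j\mathbf{a}$, so stacking the columns reproduces the Kronecker product $\mathbf{b}\otimes\mathbf{a}$. Applying $\mathbf{vec}$ to the SVD and using linearity of $\mathbf{vec}$ gives $\mathbf{vec}(W) = \sum_{i=1}^{r}\sigma_i\,(\mathbf{b}_i\otimes\mathbf{a}_i)$.

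Finally I would match this expansion against the Khatri-Rao product. By Definition~3.1, the $j$-th column of $\overline{V}\odot\overline{U}$ is the Kronecker product of the $j$-th columns of $\overline{V}$ and $\overline{U}$, namely $\mathbf{b}_j\otimes\mathbf{a}_j$; since the product acts column-wise, right-multiplication by $\sigma$ forms exactly the weighted sum $(\overline{V}\odot\overline{U})\sigma = \sum_{i=1}^{r}\sigma_i\,(\mathbf{b}_i\otimes\mathbf{a}_i)$, which equals $\mathbf{vec}(W)$ by the previous step. This closes the argument. The proof is essentially immediate once the pieces are assembled, so there is no genuine analytical obstacle; the only point demanding care is the bookkeeping of conventions, i.e. checking that the column-major vectorization is consistent with the column-wise Kronecker structure of the Khatri-Rao product, so that the tensor factors appear in the order $\mathbf{b}_i\otimes\mathbf{a}_i$ rather than the reversed order $\mathbf{a}_i\otimes\mathbf{b}_i$. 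I note that any rank-$r$ outer-product factorization of $W$ would work equally well (with $\sigma$ absorbing the scalar weights, or set to the all-ones vector), but the SVD is the cleanest choice because it guarantees exactly $r$ terms and gives $\sigma$ a natural interpretation as the vector of singular values.
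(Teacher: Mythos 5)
Your proposal is correct and follows essentially the same route as the paper's proof: take the compact SVD, vectorize each rank-one term via $\mathbf{vec}(\mathbf{a}\mathbf{b}^{\top}) = \mathbf{b}\otimes\mathbf{a}$, and identify the resulting sum with $(\overline{V}\odot\overline{U})\sigma$ by the column-wise structure of the Khatri--Rao product. If anything, your write-up is slightly more careful than the paper's, since you explicitly justify the rank-one vectorization identity and the final column-matching step, and you flag the factor-ordering convention that makes $\mathbf{b}_i\otimes\mathbf{a}_i$ (not $\mathbf{a}_i\otimes\mathbf{b}_i$) the correct term.
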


\begin{proof}
We apply the SVD to $W$ to obtain the decomposition 
$W = U S V^T$. We can then write
\begin{equation}\label{eqn:svd_decomp}
    W = \sum_{i=1}^ru_iv_i^T\sigma_i  
\end{equation}
where $u_i$ is the ith column of $U$, $v_i$ is the ith column of $V$ and $\sigma_i$ is the ith singular value of $S$.
We then observe that if we vectorize \eqref{eqn:svd_decomp} we obtain
\begin{equation}
    \mathbf{vec}(W) = \sum_{i=1}^r\mathbf{vec}(u_iv_i^T)\sigma_i.
\end{equation}
The we note that since $u_i$ and $v_i$ are column vectors we have
\begin{equation}
\mathbf{vec}(u_iv_i^T) = v_i \otimes u_i.
\end{equation}
This gives
\begin{equation}
\mathbf{vec}(W) = \sum_{i=1}^r(v_i \otimes u_i)\sigma_i.
\end{equation}
If we define $\overline{U} = [u_1\cdots u_r]$ and 
$\overline{V} = [v_1\cdots v_r]$ then by definition of the Khatri-Rao product we have
\begin{equation}
\mathbf{vec}(W) = (\overline{V} \cdot \overline{U})\sigma     
\end{equation}
where $\sigma = (\sigma_1\cdots \sigma_r)^T$.
\end{proof}

Theorem \ref{thm:KR_decomp} shows that we can use Khatri-Rao products to approximate matrices. The importance of this approximation is that if were to use Khatri-Rao products for weights of a neural model, we get a parameter efficient decomposition of the weight matrix. 

In general, we can apply Khatri-Rao products to approximate weight matrices as follows:
Given a pretrained base weight of shape 
$d_{out} \times d_{in}$ we can take two matrices $U$ and $V$ of shapes
$k_1 \times d_{in}$ and $k_2 \times d_{in}$ respectively and consider the Khatri-Rao product $U \odot V$ of shape $k_1k_2 \times d_{in}$ where $k_1$, $k_2 > 0$. 
In order to get the shape right we need to take $k_2 = \frac{d_{out}}{k_1}$. Then the total number of parameters will be $(k_1 + \frac{d_{out}}{k_1})d_{in}$. This is minimized when $k_1 = \sqrt{d_{out}}$ so that 
$k_2 = \sqrt{d_{out}}$, which follows from the following lemma.
\begin{lemma}
   Let $f(x) = (\frac{m}{x} + x)n$ for $x > 0$ where $m$, $n > 0$. Then $f$ has a minimum at the point $x = \sqrt{m}$. 
\end{lemma}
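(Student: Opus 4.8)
The plan is to treat this as a routine one-variable optimization and present the cleanest of two standard arguments. The quickest route avoids calculus entirely and uses the AM--GM inequality on the two positive terms $m/x$ and $x$. Since $m/x > 0$ and $x > 0$ for $x > 0$, AM--GM gives
\[
\frac{m}{x} + x \;\ge\; 2\sqrt{\frac{m}{x}\cdot x} \;=\; 2\sqrt{m},
\]
with equality if and only if $m/x = x$, i.e. $x^2 = m$, which on the domain $x>0$ forces $x = \sqrt{m}$. Multiplying through by $n > 0$ preserves the inequality and the equality condition, so $f(x) \ge 2n\sqrt{m}$ for all $x>0$, with equality exactly at $x=\sqrt{m}$. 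This establishes that $x=\sqrt{m}$ is the global minimizer in one line.

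For readers who prefer an elementary calculus verification, I would alternatively differentiate. First I would compute $f'(x) = n\bigl(1 - m/x^2\bigr)$, set it to zero, and solve $x^2 = m$ to find the unique positive critical point $x=\sqrt{m}$. Then I would confirm this is a minimum rather than a maximum or saddle by examining the second derivative $f''(x) = 2nm/x^3$, which is strictly positive for all $x>0$ since $m,n>0$. Positivity of $f''$ on the whole domain shows $f$ is strictly convex on $(0,\infty)$, so the single stationary point is automatically the global minimum.

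There is genuinely no hard step here; the only things to be careful about are restricting to the positive root when solving $x^2=m$ (justified by the domain $x>0$) and noting that $n>0$ is needed so that scaling does not flip the direction of the inequality or the sign of $f''$. I would present the AM--GM version as the primary proof for brevity, since it simultaneously yields the minimizer $x=\sqrt{m}$ and the minimal value $2n\sqrt{m}$, and mention the convexity argument as a self-contained alternative. This immediately gives the claimed configuration $k_1=k_2=\sqrt{d_{out}}$ upon substituting $m=d_{out}$ and $n=d_{in}$.
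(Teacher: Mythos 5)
Your proposal is correct, and your primary argument takes a genuinely different route from the paper. The paper proves the lemma with the standard calculus recipe: compute $f'(x) = n - \frac{mn}{x^2}$, solve for the critical point $x = \sqrt{m}$, then apply the second-derivative test via $f''(\sqrt{m}) = \frac{2n}{\sqrt{m}} > 0$ to conclude it is a minimum. Your AM--GM argument replaces all of this with a single inequality: $\frac{m}{x} + x \ge 2\sqrt{m}$ with equality exactly at $x = \sqrt{m}$, which buys you two things the paper's proof does not state explicitly --- the minimum is \emph{global} on $(0,\infty)$ (the second-derivative test, as written in the paper, only certifies a local minimum, and the paper does not add the boundary or uniqueness argument needed to upgrade it), and you get the minimal value $2n\sqrt{m}$, hence the exact minimal parameter count $2\sqrt{d_{out}}\,d_{in}$, for free. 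Your alternative calculus argument coincides with the paper's computation but is also slightly stronger, since you observe $f''(x) = \frac{2mn}{x^3} > 0$ on the whole domain, giving strict convexity and hence globality of the unique critical point. Either of your versions would serve as a drop-in replacement for the paper's proof; the AM--GM version is shorter and self-contained, while the calculus version stays closest to what the paper actually does.
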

\begin{proof}
Differentiating we see that $f'(x) = n + -\frac{mn}{x^2}$. Setting this to zero to find critical points gives
\begin{equation}
    n + -\frac{mn}{x^2} = 0 \Rightarrow \frac{m}{x^2} = 1
\end{equation}
which gives $x = \pm\sqrt{m}$. Since we are assuming $x > 0$ we have that $x = \sqrt{m}$ is a critical point. To understand what type of critical point this is we take the double derivative and find $f''(x) = \frac{2mn}{x^3}$. We then have that 
\begin{equation}
    f''(\sqrt{m}) = \frac{2mn}{m^{3/2}} = \frac{2n}{\sqrt{m}} > 0.
\end{equation}
This tells us the critical point $x = \sqrt{m}$ is a minimum point.
\end{proof}

If $d_{out}$ is not a perfect square we can take 
$k_1 = \lfloor d_{out} \rfloor$. Then $U$ has size $\sqrt{d_{out}} \times d_{in}$ and $V$ has shape $\sqrt{d_{out}} \times d_{in}$. The total parameters are
$2\sqrt{d_{out}}d_{in}$ which is much smaller than $d_{out}d_{in}$. We thus see that 
by using a Khatri-Rao product we obtain a low parameter approximation for the adaptors that have parameters in 
$\mathcal{O}(\sqrt{d_{out}}d_{in})$ which is much less than $\mathcal{O}(d_{out}d_{in})$ when $d_{out}$ and $d_{in}$ are large.

To further enhance parameter efficiency, we typically choose $d_{in}$ to be the smaller dimension of the original weight matrix $\mathbf{W}_0$. If $d_{out} < d_{in}$, we then transpose the resulting update to be applied to $\mathbf{W}_0$.

\subsection{Proving the Khatri-Rao of two random matrices is full rank\label{app:theorank}}
We can compare the construction of a matrix $\mathbf{W} \in  \mathbb{R}^{d_{in}\times d_{in}}$ from $\mathbf{U} \in \mathbb{R}^{k \times d_{in}}$ and $\mathbf{V} \in \mathbb{R}^{k \times d_{in}}$  where $k << d_{in}$ using a Khatri-Rao product compared to standard low rank approximations used in models such as LoRA. In the context of LoRA the matrices $U$ and $V$ would be multiplied as follows
$U^TV$ to produce a matrix of size $d_{in} \times d_{in}$. Since 
$k < d_{in}$ and assuming $U$ and $V$ have rank 
$k$, we then have by properties of the rank of a product that
\begin{equation}
Rank(U^TV) = k.    
\end{equation}
In particular there are no conditions we can impose on $U$ and $V$ such that $U^TV$ has rank greater than $k$. However, by taking a Khatri-Rao product we will show that under suitable conditions we can obtain a matrix with much larger rank $= \textit{min}(k^2, d_{in})$. To show this we need the following lemma borrowed from from Albert~\etal \cite{2025_ICLR_RandLoRA} (lemma D.2) that we rewrite here to allow this proof to be self-contained.
\begin{lemma}\label{lem:random_draws}
Let $\{X_1,\ldots,X_n\}$ denote $n$ vectors in $\R^m$ where 
$n \leq m$ drawn i.i.d from a Gaussian or uniform distribution. Then with probability $1$ $\{X_1,\ldots,X_n\}$ will be linearly independent. 
\end{lemma}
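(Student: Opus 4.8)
The plan is to show that the event that $\{X_1,\ldots,X_n\}$ fails to be linearly independent is contained in a set of Lebesgue measure zero, which the underlying distribution assigns probability zero. First I would reformulate the claim: stack the vectors as the columns of an $m \times n$ matrix $X = [X_1 \cdots X_n]$, so that linear independence of the columns is equivalent to $X$ having full column rank $n$, and hence to the nonvanishing of at least one of the $n \times n$ minors of $X$. This reduces the problem to controlling the locus where all such minors vanish.

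I would then proceed by induction on $k$, proving that $\{X_1, \ldots, X_k\}$ is linearly independent with probability one for each $k \le n$. The base case $k=1$ holds since $P(X_1 = \mathbf{0}) = 0$ for an absolutely continuous distribution. For the inductive step I condition on the realization of $X_1, \ldots, X_{k-1}$: on the probability-one event that these are independent, they span a subspace $S = \mathrm{span}(X_1,\ldots,X_{k-1})$ of dimension $k-1 < m$. Such a proper linear subspace of $\R^m$ has Lebesgue measure zero. Since $X_k$ is drawn independently from an absolutely continuous distribution, $P(X_k \in S \mid X_1,\ldots,X_{k-1}) = 0$, and integrating over the conditioning variables (a Fubini-type argument) gives $P(X_k \in \mathrm{span}(X_1,\ldots,X_{k-1})) = 0$. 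Thus $\{X_1,\ldots,X_k\}$ is independent almost surely, completing the induction at $k=n$.

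The only property of the specific distributions that I would invoke is that both the standard Gaussian on $\R^m$ (coordinates i.i.d.\ standard normal) and the uniform distribution on $[-\delta,\delta]^m$ are absolutely continuous with respect to $m$-dimensional Lebesgue measure; this is the sole hypothesis used, so the result in fact holds for any such distribution. An alternative, non-inductive route is to note that the rank-deficient locus is the common zero set of the $n \times n$ minor polynomials, hence a proper algebraic variety (proper because full-rank configurations exist), and any proper algebraic variety has Lebesgue measure zero.

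The main obstacle is making the conditioning step fully rigorous: one must verify that ``independence of the first $k-1$ vectors'' is a measurable, probability-one event, and that after conditioning on it the vector $X_k$ remains absolutely continuous and independent, so that the measure-zero conclusion survives the integration over the conditioning variables. The underlying geometric fact --- that a proper subspace has measure zero --- is elementary but must be stated explicitly, since it is exactly the ingredient that would fail for a distribution that is not absolutely continuous (for instance a discrete one), and it is what pins the argument to the Gaussian and uniform hypotheses.
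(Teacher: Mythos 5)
Your proposal is correct and follows essentially the same argument as the paper: both proceed by drawing the vectors sequentially, observing that the span of the previously drawn vectors is a proper subspace of Lebesgue measure zero, and invoking absolute continuity of the Gaussian or uniform law together with independence to conclude each new vector avoids that span almost surely. The only differences are cosmetic --- you work directly in $\R^m$ with an explicit Fubini-type conditioning step (and mention an alternative via vanishing minors defining a proper algebraic variety), whereas the paper inserts an unnecessary detour through unit-normalized vectors on the closed unit ball; your version is, if anything, the cleaner rendering of the same idea.
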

\begin{proof}
We first note that any measure defined via a Gaussian or Uniform probability distribution is absolutely continuous with respect to the Lebesgue measure. Meaning they have the same sets of measure zero as the Lebesgue measure.

We then prove the case that $\{X_1,\ldots,X_n\}$ are vectors of unit length. Since the vectors were drawn independently, we can first assume we drew $X_1$. The probability that this is the zero vector is $0$ w.r.t the Lebesgue measure on the closed unit ball $B_N(0)$ about the origin in $\R^N$ and hence any other measure absolutely continuous to it. Then draw $X_2$ and note that the probability that $X_2$ lies in 
$span\{X_1\} \cap B_N(0)$ is also $0$ since $span\{X_1\} \cap B_N(0)$ forms a set of $0$ Lebesgue measure in $B_N(0)$. Continuing in this way we find that $\{X_1,\ldots,X_n\}$ will be linearly independent with probability $1$.

For the general case where $\{X_1,\ldots,X_n\}$ are not drawn to have unit length i.e. drawn on the sphere in $\R^N$, we simply note that we can draw each one and then divide by its norm producing one of unit length. Since normalizing by the norm doesn't affect linear independence we get by the above case that  $\{X_1,\ldots,X_n\}$ must be linearly independent with probability $1$.
\end{proof}

We now prove theorem~\ref{theo:fullrank}.

\begin{proof}
Let $\mathbf{U} \in \mathbb{R}^{k \times d_{in}}$ and $\mathbf{V} \in \mathbb{R}^{k \times d_{in}}$ where $k \leq d_{in} \leq k^2$ be matrices whose entries are chosen i.i.d. from a standard Gaussian or uniform distribution. Since $k \leq d_{in}$ write 
$d_{in} = nk + p$ where $0 \leq p < k$ i.e. $p$ is the remainder when we divide $d_{in}$ by 
$k$. Note that since the entries of $U$ and $V$ are chosen i.i.d from a Gaussian or uniform distribution we have with probability $1$ that none of the columns of $U$ are multiples of each other and none of the columns of $V$ are multiples of each other. Furthermore, 
using lemma \ref{lem:random_draws} we have with probability $1$ that the $k$ column vectors $\{U_1,\ldots , U_{k}\}$ are linearly independent, as well as the second $k$  column 
vectors $\{U_{k+1},\ldots , U_{2k}\}$, and continuing in this way each batch of $k$ column vectors
$\{U_{(i-1)k+1},\ldots , U_{ik}\}$ for 
$1 \leq i \leq n$ are linearly independent and the final 
$p$ vectors 
$\{U_{nk+1},\ldots , U_{nk+p}\}$ are linearly independent. We can also assume the same for the columns vectors of $V$. 

We now observe that because 
\begin{equation}
\{U_{(i-1)k+1},\ldots , U_{ik}\}    
\end{equation}
is linearly independent  and 
\begin{equation}
 \{V_{(i-1)k+1},\ldots , V_{ik}\}   
\end{equation}
 is linearly independent  for $1 \leq i \leq n$ and 
\begin{equation}
\{U_{nk+1},\ldots , U_{nk+p}\}    
\end{equation}
are linearly independent and 
\begin{equation}
\{V_{nk+1},\ldots , V_{nk+p}\}    
\end{equation}
 are linearly independent. We have that 
 \begin{equation}
 \{U_{(i-1)k+1}\otimes V_{(i-1)k+1},\ldots , U_{ik}\otimes U_{ik}\}    
 \end{equation}
 are linearly independent for 
$1 \leq i \leq n$ and that 
\begin{equation}
\{U_{nk+1}\otimes V_{nk+1} ,\ldots , U_{nk+p}\otimes V_{nk+p}\}    
\end{equation}
are linearly independent. This uses the fact that given a collection of $p$ linearly independent vectors 
$x_1,\ldots ,x_p$ in $\R^q$ and another collection of $p$ linearly independent vectors \{$y_1,\ldots ,y_p$ in $\R^q$\} the collection of Kronecker products \{$x_1\otimes y_1,\ldots ,x_p\otimes y_p$\} in $\R^{q^2}$ are linearly independent.

Furthermore, since none of the column vectors of $U$ are multiplies of the others and none of the column vectors of $V$ are multiples of its others we have that the set of vectors
\begin{equation}\label{eqn:final_tensor}
\{U_{1}\otimes V_{1}, \ldots ,
U_{nk+1}\otimes U_{nk+1}, \ldots, 
U_{nk+p}\otimes U_{nk+p}\}
\end{equation}
are linearly independent. Then observe that the Khatri-Rao product 
$U \odot V$ has columns precisely given by \eqref{eqn:final_tensor} and thus the columns of $U \odot V$ are linearly independent. Since 
$d_{in} \leq k^2$ we have that $rank(U\odot V) = d_{in}$ as required.
\end{proof}

\section{Further empirical differences with Kronecker adapter\label{app:morediffkrona}}
Because Kronecker adapters (Krona) also uses a form of Kronecker products for PEFT, we propose here to highlight more differences between KRAdapter and Krona in terms of effective rank at initialization. Empirical evidence from vision and language tasks presented in the main paper indicates that KRAdapter consistently attains higher effective ranks than Krona given comparable trainable parameter budgets. While a comprehensive theoretical justification for this observation is beyond the scope of this empirical study, we undertake a controlled numerical experiment to analyze the effective rank and singular value distribution resulting from matrix approximation using both Kronecker and Khatri-Rao products.  Specifically, we generate random matrices of dimensions relevant to transformer architectures, ranging from ViT-L/14 attention heads (768, 1024) to LLama 3.1 attention heads (4096, 4096). We configure KRAdapter-style factorization based on its inherent shape-dependent parameter allocation.  To ensure a fair comparison, we then tune Krona's hyperparameters to precisely match the number of trainable parameters used by the KRAdapter configuration. The parameters are then initialized using a kaiminig uniform initialization strategy. Figure~\ref{fig:kravskronasvd} presents the singular value decomposition and effective rank for both factorization methods across these matrix dimensions.  Our analysis reveals that the Khatri-Rao product yields a consistently higher effective rank (10-50\%) and a more gradual decay in the singular value spectrum compared to the Kronecker product.  This suggests a more uniform distribution of singular values, indicative of richer representational capacity. These empirical findings substantiate the observed performance advantages of KRAdapter in vision and language tasks, highlighting the superior effective rank achieved by Khatri-Rao product factorization for parameter-efficient adaptation.

\begin{figure*}[h]
\includegraphics[width=\textwidth]{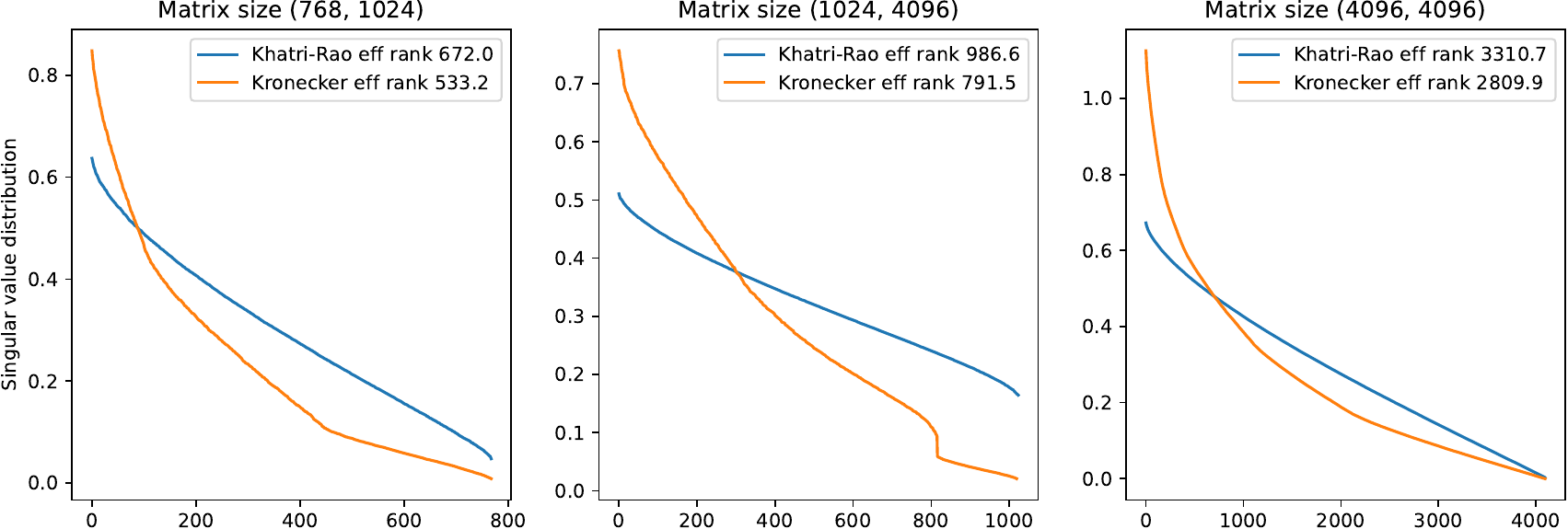}    
\caption{We compare the singular value distribution and effective rank resulting from a parameter-efficient construction of a matrix of set size using either Khatri-Rao or Kronecker products. For an equivalent amount of randomly initialized parameters, the Khatri-Rao produces a matrix with a smooth, more balanced svd sprectrum, resulting in a higher effective rank.
\label{fig:kravskronasvd}}
\end{figure*}

\section{Details about the toy experiments~\label{app:toy}}
\subsection{Training details}
The matrices used in our experiments have a size of $1024\times 768$. We aim to match the number of parameters of our proposed KRAdapter as closely as possible. Minor discrepancies in parameter counts for other methods arise due to the inherent structural differences of each adaptation technique. Specifically, the number of parameters for each method is: LoRA and SinLoRA: $50,176$, Krona: $50,700$, RandLoRA: $49,168$, and KRAdapter: $49,152$. We train models using the AdamW optimizer~\footnote{https://pytorch.org/docs/stable/generated/torch.optim.AdamW.html} for $100$ iterations with a fixed learning rate of $10^{-2}$. The AdamW optimizer is used with default parameters ($\beta_1=0.9$, $\beta_2=0.999$, weight decay=0.01). Our training objective is to minimize the mean of the squared error between the predicted and target matrices.

\subsection{Matrix generation}
We generate six different synthetic patterns, each designed to probe specific aspects of parameter-efficient fine-tuning algorithms. \textbf{Normally-distributed Random Matrix} generated from a standard normal distribution. This serves as a baseline representing a high-rank weight matrix, testing the algorithms' general approximation capability. \textbf{Sparse Random Matrix (90\% Sparsity)}, a normally distributed random matrix where 90\% of elements are randomly set to zero. This baseline simulates scenarios where pre-trained models contains crucial parameters that should not be modified during fine-tuning. \textbf{PCA-Whitened Random Matrix}, a random matrix transformed using Principal Component Analysis (PCA) whitening. This process de-correlates the random features, assessing how well algorithms can generate highly de-correlated representations. \textbf{Low-Rank Matrix} constructed by taking a normally distributed random matrix and zeroing out all but the top one fourth of singular values. Tests the ability of full-rank algorithms to model low-rank matrices. \textbf{CLIP ImagNet fine-tune delta (Vision or Language)}, obtained by the element-wise difference between the pre-trained CLIP-ViT-L/14 weights and the weights obtained after standard fine-tuning on ImageNet (also known as task vector~\cite{2024_NeurIPS_atlas}). The weight difference is extracted from the last attention layer of either the vision or language backbone. This pattern represents a realistic target weight for LoRA when adapting pre-trained transformer weights, allowing to assess performance on real-world fine-tuning. \textbf{High/Low frequency features} where rows are generated using up to $5$ superposed sinusoidal functions, with frequencies linearly increasing along the rows. The high frequencies are contained between $[1000, 10000]$ Hz while the low frequencies are contained between $[1, 100]$ Hz. This structured pattern assesses the algorithms' bias towards feature frequencies.

For the normally-distributed random and identity matrices, we respectively use the \texttt{torch.randn}~\footnote{https://pytorch.org/docs/main/generated/torch.randn.html} and \texttt{torch.eye}~\footnote{https://pytorch.org/docs/main/generated/torch.eye.html} functions to generate matrices of the desired size.

\textbf{PCA-Whitened Random Matrix:} We generate a normally-distributed random matrix using \texttt{torch.randn} and then perform PCA whitening. This involves multiplying the data by the eigenvectors of the covariance matrix, effectively decorrelating the features. We then scale each row of the resulting matrix by the square root of the corresponding eigenvalue to normalize the variance.

\textbf{High/Low frequency features:} Each row of the matrices is generated by sampling a sinusoid function $\mathbf{f}(f, t) = \sin{(2\pi f t)}$ over one second. The frequency $f$ increases linearly from $1$ Hz for the first row to $1,000$ Hz for the last row ($1,000$ to $10,000$ for the high frequencies). This creates a matrix where each row represents a sinusoid with a different frequency.

\subsection{Visualization}
We propose a visualization of the achieved reconstruction for each PEFT algorithm in Figure~\ref{fig:vis-toy} for smaller $128\times 128$ matrices. For the fine-tuned weights, we select the first $128$ rows and columns.
\begin{figure*}[h]
\centering
\includegraphics[width=.8\textwidth]{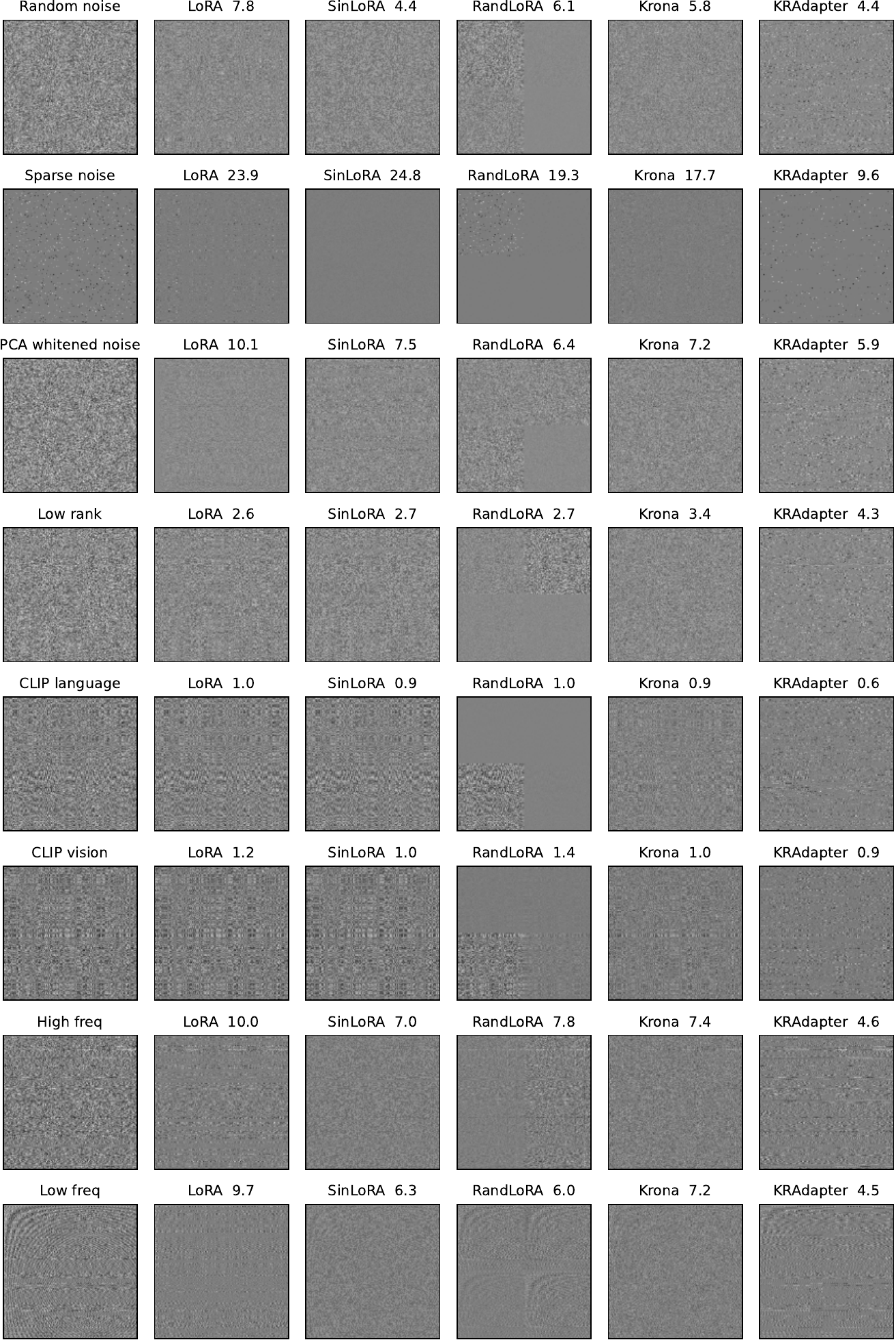}  
\caption{Toy experiment. We evaluate the capacity of PEFT methods to produce specific types of weight matrices. We report the generated matrices according to the target (left) and the absolute element-wise nuclear error. Lower is better. All algorithms train at least the same amount of parameters as KRAdapter}
\label{fig:vis-toy}
\end{figure*}

\section{Effective rank~\label{app:effrank}} To further investigate the intrinsic dimensionality of each method we report the average effective ranks averaged across attention layers post fine-tuning in Table~\ref{tab:visionrank}. Specifically, the effective rank~\cite{2007_ESPC_effectiverank} of a matrix $M$ is calculated as $\exp({-\sum_i{S_i^n\log{S_i^n}}})$, where $S_i^n=\frac{S_i}{\sum_i{S_i}}$ represents the sum-normalized singular values of $M$. An effective rank close to the mathematical rank indicates that the weight matrix makes full use of the available spectrum to significantly modify the space in a wide range of directions. We report that the effective rank of KRAdapter is consistently higher than that of other theoretically full-rank algorithms.

\begin{table}[h!]
    \centering
    \setlength{\tabcolsep}{0.5pt}
    \begin{tabular}{lcccccc}
        \toprule
        & \rotatebox[origin=c]{45}{LoRA} & \rotatebox[origin=c]{45}{SinLoRA} & \rotatebox[origin=c]{45}{RandLoRA} & \rotatebox[origin=c]{45}{Krona} & \rotatebox[origin=c]{45}{KRAdapter} & \rotatebox[origin=c]{45}{Max rank} \\
        \midrule
        ViT-B-32 & 4.5 & 21.9 & 494.8 & 518.5 & \textbf{705.9} & 768 \\
        ViT-L-16 & 13.1 & 31.8 & 587.0 & 744.0 & \textbf{959.7} & 1024 \\
        LLama3.1 & 16.8 & 24.0 & 562.3 & 734.2 & \textbf{970.6} & 1024\\
        Qwen2.5-7B & 8.5 & 18.7 & 247.6 & 310.5 & \textbf{486.6}& 512 \\
        \bottomrule
    \end{tabular}
    \caption{Effective ranks of full-rank PEFT algorithms for vision or language architectures.}
    \label{tab:visionrank}
\end{table}

\section{Training times and VRAM usage}
We find that all algorithms use comparable amounts of VRAM during training except for RandLoRA which comes at the cost of a slight increase in training time. We report training time results in Table~\ref{tab:training_time_comparison} for various ViT architecture for 1 epoch on ImageNet and LLama3-8b for the commonsense reasoning task for 4 epochs (160k samples in total). Note that PEFT algorithms are trained on attention layers only. Although not reported in this table, DoRA's training time is comparable to RandLoRA.

\begin{table}[htbp]
    \centering
    
    \begin{tabular}{lcccccc}
        \toprule
        Algorithm & ViT-B/32 & ViT-L/14 & ViT-H/14 & LLama3-8B & Qwen2.5-7B \\
        \midrule
        LoRA      & 16.8 mins & 134.1 mins  & 215.5 mins  & 243.3 mins & 222.2 mins \\
        SinLoRA   & 16.8 mins & 136.9 mins & 216.6 mins  & 246.2 mins & 224.3 mins \\
        RandLoRA  & 16.7 mins & 138.4 mins  & 225.5 mins  & 265.3 mins & 235.2 mins \\
        Krona     & 16.6 mins & 135.9 mins  & 217.2 mins  & 250.4 mins & 227.5 mins \\
        KRAdapter & 16.5 mins & 137.5 mins  & 220.1 mins  & 247.6 mins & 226.3 mins \\
        FT        & 21.1 mins & 167.9 mins & 270.5 mins & Not trained & Not trained \\
        \bottomrule
    \end{tabular}
    \caption{Comparison of training time for one epoch on ImageNet on CLIP architectures and LLama3-8B, Qwen2.5-7B for one epoch on the commonsense reasoning dataset}
    \label{tab:training_time_comparison}
    \vspace{0.1cm}
    
\end{table}

\section{CLIP classification\label{app:classifapp}}

\subsection{Dataset details}
We fine-tune pre-trained vision-language architectures on $11$ vision datasets. For few-shot learning experiments, we consistently train models for 10 epochs. In contrast, for 50\% and 100\% fine-tuning scenarios, we follow~\cite{2024_NeurIPS_atlas,2025_ICLR_RandLoRA} and adjust the number of training epochs for the full fine-tuning baseline based on convergence behavior, aiming for optimal performance. We do not perform early stopping as we do not observe significant over-fitting. All algorithms use the same training samples and training epochs. Detailed specifications of the $11$ datasets, including number of training samples and the specific number of epochs used, are reported in Table~\ref{tab:datasets}.

\begin{table}[h!]    
    \centering
    \setlength{\tabcolsep}{2pt} 
    \scriptsize
    \begin{tabular}{clccccccccccccccccc}
        \toprule
        \# & Datasets & Classes & \multicolumn{3}{c}{Splits} & Epochs \\
        \cline{4-6} \\ [-5pt]
        & & & \textit{train} & \textit{val} & \textit{test} &\\
        \midrule
        (1) & Cars & 196 & 7,330 & 814 & 8,041 & 35 \\
        (2) & DTD & 47 & 3,384 & 376 & 1,880 & 76 \\
        (3) & EuroSAT & 10 & 21,600 & 2,700 & 2,700 & 12 \\
        (4) & SUN397 & 397 & 17,865 & 1,985 & 19,850 & 14 \\
        (5) & Food101 & 101 & 70,750 & 5,000 & 25,250 & 15 \\
        (6) & Caltech101  & 101 & 6,941 & 694 & 1,736 & 10 \\
        (7) & FGVCAircraft & 100 & 3,334 & 3,333 & 3,333 & 60 \\
        (8) & Flowers102 & 102 & 1,020 & 1,020 & 6,149 & 40 \\
        (9) & OxfordIIITPet & 37 & 3,312 & 368 & 3,669 & 5 \\
        (10) & UCF101  & 101 & 7,639 & 1,898 & 3,783 & 20 \\
        (11) & ImageNet & 1,000 & 1,276,167 & 5,000 & 50,000 & 1 \\
        \bottomrule
    \end{tabular}
    \captionsetup{font=footnotesize}
    \caption{Vision datasets used for the image classification experiments}
    \label{tab:datasets}    
\end{table}
\subsection{Classic datasets}
We report detailed average results for the classic datasets of Table~\ref{tab:datasets} for ViT-B/32, ViT-L/14 and ViT-H/14 in Table~\ref{tab:image_class}.
\begin{table*}[htbp]
    \centering
    \setlength{\tabcolsep}{1.5pt}    
    \begin{tabular}{lcccccccccccccccccc}
        \toprule
        & \multicolumn{6}{c}{ViT-B/32} & \multicolumn{6}{c}{ViT-L/14} & \multicolumn{6}{c}{ViT-H/14} \\
        \cmidrule(lr){2-7} \cmidrule(lr){8-13} \cmidrule(lr){14-19} 
        Shots & 1 & 4 & 16 & 50\% & 100\% & Avg. & 1 & 4 & 16 & 50\% & 100\% & Avg.  & 1 & 4 & 16 & 50\% & 100\% & Avg.\\
        \midrule
        LoRA & 60.93 & 66.11 & 69.47 & 74.53 & 77.48 & 69.70 & 74.82 & 78.65 & 81.64 & 85.59 & 88.17 & 81.77 & 79.82 & 80.91 & 83.00 & 86.39 & 88.51 & 83.73 \\
        SinLoRA & 60.36 & 67.93 & 72.31 & 75.90 & 78.38 & 70.98 & 75.43 & 80.09 & 82.69 & 86.14 & 88.03 & 82.48 & 79.96 & 82.59 & 84.66 & 86.49 & 88.22 & 84.38 \\
        RandLoRA & 59.40 & 68.98 & 73.91 & 78.57 & 81.99 & 72.57 & 76.26 & 81.60 & 84.28 & 87.92 & 89.93 & 84.00 & 81.40 & 84.19 & 86.52 & 89.48 & 90.83 & 86.48 \\
        Krona & 58.64 & 68.94 & 73.86 & 78.05 & 81.12 & 72.12 & 75.75 & 81.52 & 84.47 & 88.11 & 89.85 & 83.94 & 79.74 & 84.03 & 86.68 & 89.62 & 90.81 & 86.18 \\          
        KRAdapter & 58.86 & 69.28 & 74.80 & 79.67 & 82.74 & 73.07 & 76.39 & 81.97 & 85.14 & 88.79 & 90.46 & 84.55 & 81.18 & 84.75 & 87.10 & 89.62 & 90.76 & 86.68 \\        
        \midrule
        FT & 58.90 & 70.03 & 75.52 & 80.31 & 83.42 & 73.64 & 77.39 & 80.96 & 84.97 & 87.91 & 90.03 & 84.25 & 77.39 & 80.96 & 84.97 & 87.91 & 90.03 & 83.65 \\    
        \bottomrule
    \end{tabular}    
    \caption{Parameter-efficient vision-language CLIP tuning for image classification.\label{tab:image_class}}
\end{table*}
\subsection{VTAB1k\label{app:vtab1kdatasets}}

The Visual Task Adaptation Benchmark (VTAB)~\cite{2019_arxiv_vtab1k} is a collection of datasets used to evaluate the capacity of PEFT algorithms to adapt large pretrained models to 3 categories of tasks.
\subsubsection{Dataset presentation}
\paragraph{Natural subset}
\textit{Caltech101} \cite{2022_misc_caltech101} focuses on classifying images of 102 object categories, including common objects and a background class.
\textit{CIFAR-100} \cite{2009_CIFAR} is a natural image classification dataset with 100 classes.
The \textit{DTD} dataset \cite{2014_CVPR_DTD} involves classifying textural patterns across 47 classes.
\textit{Flowers102} \cite{2020_arxiv_Flowers102} is dedicated to classifying 102 flower species found in the UK.
\textit{Pets} \cite{2012_CVPR_pets} is a dataset for classifying cat and dog breeds, containing 37 classes.
\textit{Sun397} \cite{2010_CVPR_SUN397} is a scenery classification benchmark with 397 hierarchically structured classes.

\paragraph{Specialized subset}
\textit{SVHN} \cite{2011_NeurIPSW_svhn} is a dataset for classifying street-view house numbers with 10 classes.
\textit{EuroSAT} \cite{2018_IGRSS_eurosat} consists of Sentinel-2 satellite imagery for land use classification into 10 classes.
\textit{Resisc45} \cite{2017_IEEE_resisc45} is a remote sensing image scene classification dataset with 45 classes.
\textit{Patch Camelyon} \cite{2018_arxiv_patchcamelyon} is a large dataset of histopathologic scans for binary classification of metastatic tissue presence.
The \textit{Retinopathy} dataset \cite{2015_kaagle_retinopathy} focuses on predicting the severity of Diabetic Retinopathy on a 0-4 scale from high-resolution retina images.

\paragraph{Structured subset} The \textit{CLEVR}~\cite{2017_CVPR_CLEVR} datasets utilize images from a visual question answering task, with the 'count' variant predicting the number of objects and the 'distance' variant predicting the depth of the closest object. The \textit{dSprites}~\cite{2017_git_DSPRITES} dataset, originally designed for disentanglement learning, is repurposed for location and orientation prediction tasks of simple 2D shapes. Similarly, the \textit{SmallNORB}~\cite{2004_CVPR_SmallNORB} dataset, containing images of 3D toys, is used for predicting azimuth and elevation angles of the objects. \textit{DMLab}~\cite{2016_arxiv_DMLab} provides 3D navigation environments where the task is to classify distances to reward objects, and finally, \textit{KITTI}~\cite{2013_IJRR_Kitti} involves predicting the depth of vehicles in real-world driving scenes. These tasks require models to reason about object counts, distances, orientations, and locations, spanning both 2D and 3D visual understanding which presents significant challenges for CLIP architectures.
\subsubsection{Prompt design}
Although the prompts design for the natural and part fo the specialized subset is straight forward, these are not evident for the structed subset especially when the classification is discrete. We settle on the class names described in Table~\ref{tab:clip_prompts} as we find they perform better than random for the zero-shot models and allow to see an improved performance with stronger zero-shot models. The final prompt we train CLIP with is "An image of a \{classname\}.' and we train with the SimCLR~\cite{2020_ICML_SimCLR} augmentations.

\begin{sidewaystable}[htbp]
    \centering
   
    \begin{tabularx}{\textwidth}{lX}
        \toprule
        Dataset & Class names \\
        \midrule
        CAMELYON & \texttt{'with no metastatic tissue', 'containing metastatic tissue'} \\
        RETINOPATHY & \texttt{'with no diabetic retinopathy', 'with mild diabetic retinopathy', 'with moderate diabetic retinopathy', 'with severe diabetic retinopathy', 'with extreme diabetic retinopathy'} \\
        CLEVR\_COUNT & \texttt{'3 items', '4 items', '5 items', '6 items', '7 items', '8 items', '9 items', '10 items'} \\
        CLEVR\_DIST & \texttt{'congested', 'larger', 'large', 'normal', 'small', 'tiny'} \\
        DSPRITES\_LOC & \texttt{'0-6 percent x axis', '6-12 percent x axis', '12-18 percent x axis', '18-25 percent x axis', '25-31 percent x axis', '31-37 percent x axis', '37-43 percent x axis',  '43-50 percent x axis', '50-56 percent x axis', '56-62 percent x axis', '62-68 percent x axis', '68-75 percent x axis', '75-81 percent x axis', '81-87 percent x axis', '87-93 percent x axis','93-100 percent x axis'} \\
        DSPRITES\_ORIENT & \texttt{'shape rotated 0-22.5 degrees clockwise', 'shape rotated 22.5-45.0 degrees clockwise', 'shape rotated 45.0-67.5 degrees clockwise', 'shape rotated 67.5-90.0 degrees clockwise', 'shape rotated 90.0-112.5 degrees clockwise', 'shape rotated 112.5-135.0 degrees clockwise', 'shape rotated 135.0-157.5 degrees clockwise', 'shape rotated 157.5-180.0 degrees clockwise', 'shape rotated 180.0-202.5 degrees clockwise', 'shape rotated 202.5-225.0 degrees clockwise', 'shape rotated 225.0-247.5 degrees clockwise', 'shape rotated 247.5-270.0 degrees clockwise', 'shape rotated 270.0-292.5 degrees clockwise', 'shape rotated 292.5-315.0 degrees clockwise', 'shape rotated 315.0-337.5 degrees clockwise', 'shape rotated 337.5-360.0 degrees clockwise'} \\
        SMALLNORB\_AZIMUT & \texttt{'shape rotated by 0-20 degrees clockwise',  'shape rotated by 20-40 degrees clockwise',  'shape rotated by 40-60 degrees clockwise',  'shape rotated by 60-80 degrees clockwise',  'shape rotated by 80-100 degrees clockwise',  'shape rotated by 100-120 degrees clockwise',  'shape rotated by 120-140 degrees clockwise',  'shape rotated by 140-160 degrees clockwise',  'shape rotated by 160-180 degrees clockwise',  'shape rotated by 180-200 degrees clockwise',  'shape rotated by 200-220 degrees clockwise',  'shape rotated by 220-240 degrees clockwise',  'shape rotated by 240-260 degrees clockwise',  'shape rotated by 260-280 degrees clockwise',  'shape rotated by 280-300 degrees clockwise',  'shape rotated by 300-320 degrees clockwise',  'shape rotated by 320-340 degrees clockwise',  'shape rotated by 340-360 degrees clockwise'} \\
        SMALLNORB\_ELEVATION & \texttt{'object photographed with a 30 degrees elevation', 'object photographed with a 35 degrees elevation', 'object photographed with a 40 degrees elevation', 'object photographed with a 45 degrees elevation', 'object photographed with a 50 degrees elevation', 'object photographed with a 55 degrees elevation', 'object photographed with a 60 degrees elevation', 'object photographed with a 65 degrees elevation', 'object photographed with a 70 degrees elevation'} \\
        DMLAB & \texttt{'obstructed', 'large', 'bigger', 'normal', 'smallest', 'empty'} \\
        KITTI & \texttt{'congested', 'close', 'distant', 'empty'} \\
        \bottomrule
    \end{tabularx}
     \caption{CLIP Prompts for the Structed subset of VTAB-1k and + Camelyon and Retinopathy. We train the VL CLIP models with the prompt "An image of a \{classname\}."}
    \label{tab:clip_prompts}
\end{sidewaystable}

\subsubsection{Detailed results\label{app:vtabdetail}}
Tables \ref{tab:vtab1kvitb32} report per dataset detailed results for PEFT algorithms using the ViT-\{B/32,L/14,H/14\} architectures respectively
\begin{table*}[htbp]
\centering
\setlength{\tabcolsep}{1pt}
\begin{tabular}{lccccccccccccccccccccccccc}
\toprule
 & \multicolumn{7}{c}{\textbf{Natural}} & \multicolumn{4}{c}{\textbf{Specialized}} & \multicolumn{8}{c}{\textbf{Structured}} \\
\cmidrule(lr){2-8} \cmidrule(lr){9-12} \cmidrule(lr){13-20}
 & \rotatebox{90}{CIFAR-100} & \rotatebox{90}{Caltech101} & \rotatebox{90}{DTD} & \rotatebox{90}{Flowers102} & \rotatebox{90}{Pets} & \rotatebox{90}{SVNH} & \rotatebox{90}{Sun397} & \rotatebox{90}{Camelyon} & \rotatebox{90}{EuroSAT} & \rotatebox{90}{Resisc45} & \rotatebox{90}{Retinopathy} & \rotatebox{90}{Clevr-Count} & \rotatebox{90}{Clevr-Dist} & \rotatebox{90}{dSpr-Loc} & \rotatebox{90}{dSpr-Ori} & \rotatebox{90}{sNORB-Azim} & \rotatebox{90}{sNORB-Ele} & \rotatebox{90}{DMLab} & \rotatebox{90}{KITTI-Dist} & \rotatebox{90}{\textbf{Mean Nat.}} & \rotatebox{90}{\textbf{Mean Spe}} & \rotatebox{90}{\textbf{Mean Struc}} & \rotatebox{90}{\textbf{Group Mean}} & \rotatebox{90}{\textbf{All Mean}} \\
\midrule
\multicolumn{25}{l}{\textbf{ViT-B/32}} \\
\midrule
Zero-shot & 41.5 & 78.8 & 41.7 & 66.7 & 87.7 & 25.8 & 59.5 & 60.7 & 31.4 & 53.8 & 55.7 & 25.1 & 17.4 & 6.7 & 8.1 & 6.2 & 11.6 & 20.2 & 39.8 & 57.4 & 50.4 & 16.9 & 41.6 & 38.9 \\
LoRA & 48.6 & 84.0 & 60.4 & 76.8 & 84.0 & 89.1 & 51.3 & 83.8 & 95.0 & 83.1 & 68.5 & 67.4 & 45.1 & 36.8 & 46.1 & 22.4 & 39.9 & 50.5 & 53.3 & 70.6 & 82.6 & 45.2 & 66.1 & 62.4 \\
SinLoRA & 49.0 & 84.7 & 60.6 & 84.2 & 84.4 & 90.3 & 50.6 & 84.5 & 95.0 & 83.9 & 69.0 & 69.3 & 53.1 & 78.5 & 51.5 & 22.4 & 41.2 & 51.9 & 56.8 & 72.0 & 83.1 & 53.1 & 69.4 & 66.4 \\
RandLoRA & 48.6 & 87.4 & 68.7 & 88.0 & 85.9 & 91.7 & 49.0 & 84.8 & 93.0 & 87.3 & 64.6 & 63.8 & 58.1 & 82.0 & 54.3 & 23.1 & 32.8 & 54.3 & 56.5 & 74.2 & 82.4 & 53.1 & 69.9 & 67.1 \\
Krona & 48.5 & 86.7 & 66.5 & 86.3 & 86.0 & 91.6 & 50.1 & 84.5 & 93.4 & 86.5 & 69.3 & 70.5 & 57.4 & 82.2 & 53.7 & 23.4 & 29.4 & 54.8 & 54.6 & 73.7 & 83.4 & 53.3 & 70.1 & 67.1 \\
KRAdapter & 52.3 & 88.4 & 70.3 & 88.9 & 87.0 & 91.7 & 53.5 & 84.9 & 93.1 & 88.4 & 69.4 & 69.4 & 58.3 & 79.9 & 54.0 & 25.3 & 32.1 & 54.1 & 53.0 & 76.0 & 84.0 & 53.3 & 71.1 & 68.1 \\
\cmidrule(r){2-25}
FT & 51.2 & 87.2 & 68.1 & 89.0 & 85.7 & 92.0 & 56.4 & 84.6 & 93.1 & 87.2 & 69.2 & 68.5 & 58.2 & 80.3 & 54.7 & 24.2 & 32.1 & 54.1 & 53.0 & 75.7 & 83.5 & 53.1 & 70.8 & 67.8 \\
\midrule
\multicolumn{25}{l}{\textbf{ViT-L/14}} \\
\midrule
Zero-shot & 55.9 & 80.9 & 52.5 & 78.9 & 93.3 & 56.4 & 64.2 & 54.7 & 42.6 & 66.2 & 23.9 & 19.0 & 22.5 & 6.6 & 6.8 & 5.5 & 9.3 & 21.1 & 17.6 & 68.9 & 46.9 & 13.6 & 43.1 & 40.9 \\
LoRA & 64.9 & 87.9 & 75.2 & 96.8 & 92.6 & 94.7 & 63.9 & 86.0 & 95.4 & 91.8 & 74.7 & 85.5 & 43.4 & 74.0 & 54.5 & 22.0 & 39.8 & 58.1 & 60.1 & 82.3 & 87.0 & 54.7 & 74.6 & 71.6 \\
SinLoRA & 65.9 & 88.2 & 75.7 & 97.0 & 92.5 & 94.7 & 63.7 & 86.8 & 95.6 & 92.0 & 72.8 & 86.5 & 45.5 & 84.7 & 60.9 & 22.1 & 38.0 & 60.4 & 56.4 & 82.5 & 86.8 & 56.8 & 75.4 & 72.6 \\
RandLoRA & 63.4 & 89.1 & 76.7 & 97.3 & 93.8 & 94.7 & 64.5 & 86.7 & 94.8 & 92.2 & 74.2 & 81.0 & 60.2 & 84.5 & 60.9 & 25.8 & 35.7 & 60.4 & 58.1 & 82.8 & 87.0 & 58.3 & 76.0 & 73.4 \\
Krona & 64.5 & 89.8 & 77.1 & 97.5 & 92.9 & 95.4 & 66.9 & 86.9 & 95.4 & 92.7 & 74.5 & 79.1 & 56.4 & 83.9 & 61.5 & 26.5 & 35.7 & 58.9 & 58.1 & 83.4 & 87.4 & 57.5 & 76.1 & 73.3 \\
KRAdapter & 70.0 & 89.8 & 78.5 & 98.0 & 93.7 & 95.2 & 68.5 & 86.5 & 95.1 & 93.0 & 73.0 & 81.7 & 55.3 & 79.1 & 62.0 & 25.3 & 34.5 & 59.5 & 60.1 & 84.8 & 86.9 & 57.2 & 76.3 & 73.6 \\\cmidrule(r){2-25}
FT & 63.2 & 89.9 & 76.0 & 97.8 & 92.8 & 94.2 & 68.1 & 86.6 & 95.4 & 92.3 & 75.1 & 81.8 & 49.9 & 84.7 & 64.6 & 27.2 & 37.2 & 60.4 & 60.3 & 83.2 & 87.4 & 58.3 & 76.3 & 73.6 \\

\midrule
\multicolumn{25}{l}{\textbf{ViT-H/14}} \\
\midrule
Zero-shot & 65.9 & 83.4 & 63.5 & 79.6 & 94.6 & 45.5 & 74.7 & 54.5 & 52.9 & 70.9 & 23.4 & 34.9 & 22.6 & 6.1 & 8.9 & 5.9 & 11.2 & 15.2 & 37.8 & 72.4 & 50.4 & 17.8 & 46.9 & 44.8\\
LoRA & 68.9 & 89.5 & 78.6 & 97.4 & 92.2 & 94.6 & 67.6 & 86.9 & 95.7 & 91.9 & 72.4 & 79.7 & 40.8 & 86.6 & 62.6 & 26.6 & 39.2 & 58.3 & 60.3 & 84.1 & 86.7 & 56.8 & 75.9 & 73.1 \\
SinLoRA & 69.0 & 89.8 & 78.0 & 97.4 & 92.3 & 94.7 & 68.5 & 87.3 & 95.5 & 92.0 & 72.7 & 87.1 & 41.8 & 87.4 & 62.9 & 28.3 & 39.2 & 60.0 & 58.6 & 84.2 & 86.9 & 58.2 & 76.4 & 73.8 \\
RandLoRA & 66.4 & 90.8 & 79.0 & 97.2 & 92.2 & 94.0 & 67.6 & 86.8 & 95.6 & 92.0 & 74.5 & 84.0 & 59.1 & 85.4 & 60.5 & 28.9 & 38.4 & 60.5 & 58.8 & 83.9 & 87.3 & 59.5 & 76.9 & 74.3 \\
Krona & 68.1 & 92.2 & 79.6 & 97.7 & 92.6 & 94.7 & 69.7 & 87.5 & 95.0 & 92.5 & 72.0 & 80.6 & 54.4 & 86.3 & 61.2 & 28.4 & 36.8 & 59.0 & 57.7 & 84.9 & 86.7 & 58.0 & 76.6 & 74.0 \\
KRAdapter & 71.2 & 92.5 & 80.0 & 98.1 & 93.0 & 94.4 & 71.6 & 86.1 & 95.6 & 93.1 & 73.3 & 84.5 & 53.3 & 84.0 & 60.3 & 27.2 & 36.4 & 59.5 & 59.6 & 85.8 & 87.0 & 58.1 & 77.0 & 74.4 \\
\cmidrule(r){2-25}
FT & 66.5 & 90.4 & 77.8 & 97.5 & 92.5 & 94.3 & 78.9 & 84.5 & 95.4 & 88.9 & 70.3 & 76.0 & 35.0 & 44.3 & 49.4 & 14.8 & 26.2 & 47.2 & 56.4 & 85.4 & 84.8 & 43.7 & 71.3 & 67.7 \\
\bottomrule
\end{tabular}
\caption{Accuracies training on VTAB1k benchmark. We report per dataset accuracies as well as category-wise averages. Base networks are ViT CLIP models in version - B/32, L/14 and H/14 where both vision and language backbones are trained.}
\label{tab:vtab1kvitb32}
\end{table*}

\subsection{OOD datasets\label{app:ooddatasets}}
We evaluate the out-of-distribution (OOD) generalization of image classification models trained on ImageNet~\cite{2012_NeurIPS_ImageNet}, using datasets that probe model robustness under various distribution shifts with the standard ImageNet test set:
\paragraph{ImageNet-A} (Naturally Adversarial)~\cite{2021_CVPR_ImageNetA} comprises 7,500 real-world images from 200 ImageNet classes that are confidently misclassified by standard models, yet easily recognizable by humans. ImageNet-A assesses robustness to naturally occurring, subtle adversarial examples present in real-world data, highlighting vulnerabilities beyond synthetic adversarial attacks.
\paragraph{ImageNet-R} (Renditions)~\cite{2021_ICCV_ImageNetR} contains 30,000 images across 200 ImageNet classes, featuring artistic renditions like paintings, sketches, and sculptures. It evaluates robustness to significant stylistic domain shifts, testing if models generalize beyond photographic images and capture semantic content despite variations in visual style.
\paragraph{ImageNet-Sketch}~\cite{2019_NeurIPS_ImageNetSketch} presents a more extreme domain shift with 50,000 black and white sketches across all 1,000 ImageNet classes. ImageNet-Sketch serves as a stress test, evaluating a model's ability to generalize to drastically different image modalities and rely on high-level semantic understanding rather than low-level image features.
\paragraph{ImageNet-v2}~\cite{2019_ICML_ImageNetV2} is not an OOD dataset in the same sense but an updated test set collected using the original ImageNet methodology. It aims to provide a more reliable evaluation by mitigating potential test set contamination and overfitting to the original ImageNet validation set. We study three subsets including "Freq" (Matched Frequency) which replicates the original validation set's label distribution, "Top" (Top-5 Accuracy Matched) which matches the top-5 accuracy of a reference model, and "Thresh" (Thresholded) which uses a higher worker agreement threshold for potentially cleaner labels.

\subsubsection{Detailed OOD results}
Table~\ref{tab:ooddetail} reports detailed per-dataset accuracies for the OOD experiments on ImageNet. 
\begin{table}[h]
\centering
\setlength{\tabcolsep}{1.5pt}
\begin{tabular}{lccccccccccccccc}
\toprule
 & \rotatebox{90}{ImageNet \textbf{(ID)}} & \rotatebox{90}{ImageNetA} & \rotatebox{90}{ImageNetSketch} & \rotatebox{90}{ImageNetR} & \rotatebox{90}{ImageNetV2Thresh} & \rotatebox{90}{ImageNetV2Top} & \rotatebox{90}{ImageNetV2Freq} & \rotatebox{90}{CIFAR100} & \rotatebox{90}{\textbf{OOD}} & \rotatebox{90}{\textbf{Improve ID}} & \rotatebox{90}{\textbf{Improve OOD}} & \rotatebox{90}{\textbf{Ratio}} & \rotatebox{90}{\textbf{Effrank}} & \rotatebox{90}{\textbf{Spectral}} & \rotatebox{90}{\textbf{Fro}} \\
\midrule
\multicolumn{16}{l}{\textbf{ViT-B/32}} \\
\midrule
ZS & 62.64 & 32.28 & 40.78 & 66.56 & 62.93 & 68.23 & 55.28 & 62.26 & 55.47 & n/a & n/a & n/a & n/a & n/a & n/a \\
LoRA & 72.16 & 28.6 & 42.82 & 66.10 & 70.80 & 76.32 & 62.52 & 63.79 & 58.71 & 9.52 & 3.2 & 0.34 & 20.4 & 19.6 & 4.1 \\
SinLoRA & 72.84 & 28.43 & 42.39 & 64.48 & 71.34 & 76.97 & 62.44 & 64.24 & 58.61 & 10.20 & 3.1 & 0.31 & 276.8 & 223.5 & 14.7 \\
RandLoRA & 72.01 & 27.55 & 42.05 & 64.31 & 71.00 & 76.72 & 61.8 & 65.64 & 58.44 & 9.37 & 3.0 & 0.31 & 464.6 & 46.2 & 5.7 \\
Krona & 71.88 & 28.51 & 42.38 & 66.06 & 71.04 & 76.48 & 62.16 & 65.95 & 58.94 & 9.24 & 3.5 & 0.38 & 584.0 & 44.1 & 4.9 \\
KRAdapter & 72.52 & 30.32 & 43.67 & 67.84 & 71.39 & 77.23 & 62.59 & 66.32 & 59.91 & 9.88 & 4.4 & 0.45 & 696.0 & 7.3 & 2.3 \\
\midrule
FT & 75.54 & 25.71 & 42.35 & 64.31 & 73.41 & 78.7 & 64.85 & 62.68 & 58.86 & 12.9 & 3.4 & 0.26 & 590.4 & 0.7 & 0.8 \\
\midrule
\multicolumn{16}{l}{\textbf{ViT-L/14}} \\
\midrule
ZS & 75.44 & 70.77 & 59.6 & 87.73 & 75.86 & 79.05 & 69.75 & 76.15 & 74.13 & n/a & n/a & n/a & n/a & n/a & n/a \\
LoRA & 83.34 & 70.73 & 61.36 & 86.81 & 82.22 & 84.98 & 76.06 & 78.08 & 77.18 & 7.90 & 3.05 & 0.39 & 22.6 & 46.2 & 6.6 \\
SinLoRA & 82.8 & 69.45 & 59.99 & 85.01 & 81.45 & 84.61 & 75.22 & 78.67 & 76.34 & 7.36 & 2.21 & 0.30 & 734.6 & 61.6 & 7.1 \\
RandLoRA & 82.78 & 68.96 & 59.66 & 85.02 & 81.84 & 84.93 & 75.32 & 77.95 & 76.24 & 7.34 & 2.11 & 0.28 & 605.5 & 159.9 & 11.8 \\
Krona & 84.08 & 72.09 & 61.40 & 87.17 & 82.87 & 85.55 & 76.48 & 78.88 & 77.78 & 8.64 & 3.65 & 0.42 & 755.2 & 42.7 & 5.02 \\
KRAdapter & 83.64 & 73.03 & 61.95 & 87.85 & 82.79 & 85.72 & 76.5 & 79.32 & 78.17 & 8.20 & 4.04 & 0.49 & 920.9 & 9.8 & 2.8 \\
\midrule
FT & 85.05 & 68.13 & 60.30 & 86.00 & 83.41 & 86.14 & 77.28 & 75.74 & 76.71 & 9.61 & 2.58 & 0.27 & 758.8 & 1.0 & 1.0 \\
\midrule
\multicolumn{16}{l}{\textbf{ViT-H/14}} \\
\midrule
ZS & 77.94 & 59.36 & 66.53 & 89.29 & 77.59 & 81.30 & 70.93 & 84.74 & 75.74 & n/a & n/a & n/a & n/a & n/a & n/a \\
LoRA & 83.65 & 56.76 & 64.38 & 85.62 & 82.54 & 85.91 & 76.21 & 79.89 & 75.90 & 5.71 & 0.22 & 0.04 & 27.7 & 70.0 & 7.4 \\
SinLoRA & 83.55 & 58.72 & 65.93 & 87.90 & 82.85 & 86.02 & 76.4 & 81.69 & 77.07 & 5.61 & 1.40 & 0.29 & 968 & 90.6 & 8.1 \\
RandLoRA & 82.94 & 57.04 & 63.4 & 84.90 & 81.84 & 85.17 & 75.13 & 80.58 & 75.43 & 5.00 & -0.24 & -0.05 & 752.3 & 508.3 & 21.21 \\
Krona & 85.02 & 64.33 & 65.78 & 87.52 & 83.62 & 86.55 & 77.15 & 82.43 & 78.20 & 7.08 & 2.52 & 0.36 & 882.4 & 123.3 & 9.2 \\
KRAdapter & 84.57 & 65.67 & 67.15 & 89.01 & 83.38 & 86.51 & 76.96 & 83.23 & 78.84 & 6.63 & 3.17 & 0.48 & 1140.2 & 32.8 & 5.5 \\
\midrule
FT & 84.88 & 64.23 & 67.26 & 89.68 & 81.96 & 85.07 & 75.44 & 84.88 & 78.36 & 6.94 & 2.68 & 0.39 & 935.4 & 4.3 & 1.9 \\
\bottomrule
\end{tabular}
\caption{Detailed results on OOD generalization with efficient rank\label{tab:ooddetail}}
\end{table}

\section{Commonsense reasoning\label{app:commonsense}}
\subsection{Dataset details}
We test on $8$ commonsense reasoning datasets. These benchmarks encompass a range of cognitive skills, including answering yes/no questions  BoolQ~\citep{2019_arXiv_boolq}), addressing common-sense physics inquiries (PIQA~\citep{2020_AAAI_piqa}), understanding social dynamics (SIQA~\citep{2019_arXiv_siqa}), completing multi-choice scenarios (HellaSwag~\citep{2019_arXiv_hellaswag}), binary solutions to finish sentences (WinoGrande~\citep{2021_arXiv_winogrande}), tackling both simpler and more complex elementary science questions (ARC-e and ARC-c~\citep{2018_arXiv_ARC}), and engaging in multi-stage reasoning (OBQA~\citep{2018_arXiv_OBQA}). This collection of datasets presents different challenges, ranging from understanding the nuances of language and employing everyday knowledge to making inferences about the physical and social world. For a deeper exploration of these datasets, we redirect readers to the work of Hu et al.~\citep{2023_arXiv_llmadapters}.

\subsection{Training details}
The models are trained using the Transformers library from Hugging Face\footnote{https://huggingface.co}. We followed implementation specifics detailed by Albert et al.~\cite{2025_ICLR_RandLoRA}, whose code is publicly available\footnote{\url{https://github.com/PaulAlbert31/RandLoRA}}. The training lasts for four epochs, utilizing a learning rate of $1\times 10^{-4}$ and a base scaling coefficient of $2$ for $\alpha$ weights. To combat overfitting we use dropout with a probability of $0.05$ for each adapter layer. Unless otherwise specified, hyper-parameters were kept consistent across different architectures and algorithms.  We train on the multi-choice tasks SIQA, ARC-C, ARC-E and OBQA and test on all tasks.

\section{GLUE\label{app:GLUE}}
We further report results tuning RoBERTa~\cite{2019_arxiv_roberta} on the General Language Understanding Evaluation (GLUE)~\cite{2019_ICLR_glue} dataset (see appendix~\ref{app:GLUE}). We train for the SST-2, MRPC, COLA, QNLI, RTE and STS-N tasks. We report Matthew’s correlation for CoLA, Pearson correlation for STS-B, and accuracy for the remaining tasks.  We train the key and value matrix in the attention layers of a pretrained RoBERTa-large~\cite{2019_arxiv_roberta} network configuration with 355M parameters originally and perform 5 runs to report average performance and one standard deviation. We train each run for $10$ epochs with a learning rate of $10^{-4}$.Results are reported in Table~\ref{tab:glueresults} where we find that KRAdapter slightly outperforms other algorithms on average although results are very close. In this setting, the margin for improvement is small as the task is an easy binary classification. This translates to all PEFT algorithms producing results within an error margin of each other. KRAdapter however performs competitively in this setting as well.

\begin{table*}[htbp]
\centering
\small
\setlength{\tabcolsep}{2pt}
\begin{tabular}{lccccccccc}
\toprule
LoRA & 95.6 $\pm$ 0.2 & 88.7 $\pm$ 0.9 & 64.3 $\pm$ 1.2 & 94.6 $\pm$ 0.2 & 79.1 $\pm$ 4.0 & 91.8 $\pm$ 0.4 & 85.7 $\pm$ 0.9 \\
SinLoRA & 96.1 $\pm$ 0.1 & 88.9 $\pm$ 0.9 & 63.4 $\pm$ 0.9 & 93.6 $\pm$ 0.6 & 83.7 $\pm$ 0.4 & 91.8 $\pm$ 0.1 & 86.3 $\pm$ 0.2 \\
RandLoRA & 95.7 $\pm$ 0.3 & 88.7 $\pm$ 0.4 & 63.9 $\pm$ 1.3 & 93.9 $\pm$ 0.3 & 81.7 $\pm$ 2.3 & 91.8 $\pm$ 0.2 & 85.9 $\pm$ 0.3 \\
Krona & 95.8 $\pm$ 0.2 & 88.0 $\pm$ 0.8 & 59.6 $\pm$ 0.8 & 94.3 $\pm$ 0.2 & 78.7 $\pm$ 2.4 & 91.6 $\pm$ 0.3 & 84.7 $\pm$ 0.4 \\
KRAdapter & 95.9 $\pm$ 0.4 & 89.2 $\pm$ 0.6 & 64.6 $\pm$ 0.6 & 94.1 $\pm$ 0.3 & 82.5 $\pm$ 0.7 & 92.0 $\pm$ 0.3 & 86.4 $\pm$ 0.1 \\
\bottomrule
\end{tabular}
\caption{Results on GLUE datasets with the RoBERTa-large model.}
\label{tab:glueresults}
\end{table*}
\end{document}